\documentclass[journal]{IEEEtran}
\usepackage[dvips]{graphicx}
\usepackage{epsfig,multirow}
\usepackage{amsthm,amsmath,amssymb,mathrsfs,bm}
\usepackage{amsfonts,dsfont,color,bbm}
\usepackage{algorithm,algorithmic}
\usepackage[font=footnotesize,labelfont=bf]{caption}
\usepackage{subcaption}
\usepackage{cite,epstopdf,epsf,epsfig}
\usepackage{resizegather}
\usepackage{booktabs}
\usepackage{float}
\usepackage{mathtools}
\usepackage{xr}
\usepackage{dsfont}
\usepackage{enumitem}
\usepackage{tikz}
\usetikzlibrary{positioning, fit, arrows.meta, shapes,matrix,calc,decorations.pathreplacing}

\pgfkeys{tikz/mymatrixenv/.style={decoration=brace,every left delimiter/.style={xshift=3pt},every right delimiter/.style={xshift=-3pt}}}
\pgfkeys{tikz/mymatrix/.style={matrix of math nodes,left delimiter=[,right delimiter={]},inner sep=2pt,column sep=1em,row sep=0.5em,nodes={inner sep=0pt}}}
\pgfkeys{tikz/mymatrixbrace/.style={decorate,thick}}

\tikzset{
    cheating dash/.code args={on #1 off #2}{
        \csname tikz@addoption\endcsname{%
            \pgfgetpath\currentpath%
            \pgfprocessround{\currentpath}{\currentpath}%
            \csname pgf@decorate@parsesoftpath\endcsname{\currentpath}{\currentpath}%
            \pgfmathparse{\csname pgf@decorate@totalpathlength\endcsname-#1}\let\rest=\pgfmathresult%
            \pgfmathparse{#1+#2}\let\onoff=\pgfmathresult%
            \pgfmathparse{max(floor(\rest/\onoff), 1)}\let\nfullonoff=\pgfmathresult%
            \pgfmathparse{max((\rest-\onoff*\nfullonoff)/\nfullonoff+#2, #2)}\let\offexpand=\pgfmathresult%
            \pgfsetdash{{#1}{\offexpand}}{0pt}}%
    }
}

\theoremstyle{break}

\newtheorem{theorem}{Theorem}
\newtheorem{remark}{Remark}
\newtheorem{lemma}{Lemma}
\newtheorem{prop}{Proposition}
\theoremstyle{definition}

\usepackage{balance}

\DeclareMathOperator*{\argmin}{arg\,min}

\newcommand{\Htn}{\textbf{H}_t}
\newcommand{\htn}{\textbf{h}_t}
\newcommand{\htt}{\textbf{h}_\tau}
\newcommand{\httm}{\textbf{h}_{\tau-1}}

\newcommand{\htm}{\textbf{h}_{t-1}}

\newcommand{\htp}{\textbf{h}_{t+1}}
\newcommand{\htb}{\textbf{h}_{1}}

\newcommand{\Xtp}{\textbf{X}_{t+1}}
\newcommand{\Xt}{\textbf{X}_t}

\newcommand{\xtp}{\textbf{x}_{t+1}}
\newcommand{\xt}{\textbf{x}_t}

\newcommand{\I}{\textbf{I}}
\newcommand{\C}{\bm{\vartheta}}
\newcommand{\W}{\textbf{W}}
\newcommand{\U}{\textbf{U}}

\newcommand{\del}{\partial}

\newcommand{\thk}{\bm{\theta}}
\newcommand{\muk}{\bm{\mu}}
\newcommand{\vect}{\textrm{vec}}

\newcommand{\loss}{\ell_t(\thk,\muk)}
\newcommand{\losst}{\ell_t(\thk',\muk)}

\newcommand{\phtnt}{\frac{\del \htn}{\del \htt}}
\newcommand{\phtntp}{\frac{\del \htn'}{\del \htt'}}

\newcommand{\phtth}{\frac{\del \htt}{\del \thk}}
\newcommand{\phtthp}{\frac{\del \htt'}{\del \thk}}

\newcommand{\phtmu}{\frac{\del \htt}{\del \muk}}

\newcommand{\phtmup}{\frac{\del \htt'}{\del \muk}}

\newcommand{\dt}{d_t}
\newcommand{\dht}{\hat{d}_t}

\newcommand{\pt}{p_t}
\newcommand{\pht}{\hat{p}_t}

\DeclarePairedDelimiter{\norm}{\lVert}{\rVert}
\DeclarePairedDelimiter{\norminf}{\lVert}{\rVert_\infty}
\DeclarePairedDelimiter{\bOh}{O\big(}{\big)}

\DeclarePairedDelimiter{\bnorm}{\Big\lVert}{\Big\rVert}%

\DeclarePairedDelimiter{\anglep}{\langle}{\rangle}
\DeclarePairedDelimiter{\banglep}{\Big\langle}{\Big\rangle}

\newcommand{\Wt}{\W_t}
\newcommand{\Ut}{\U_t}
\newcommand{\Ct}{\C_t}

\newcommand{\Kt}{\mathcal{K}_\theta}
\newcommand{\Km}{\mathcal{K}_\mu}
\newcommand{\Kcc}{\mathcal{K}_\vartheta}

\newcommand{\nx}{n_x}
\newcommand{\nh}{n_h}

\newcommand{\lossgen}{\ell_t(\thk,\muk)}

\newcommand{\slossgen}{L_{t,w}(\thk,\muk)}
\newcommand{\slossgent}{L_{t,w}(\thk_t,\muk_t)}
\newcommand{\slossgentp}{L_{t,w}(\thk_{t+1},\muk_{t+1})}

\newcommand{\projTh}{\frac{\del_{\Kt} \lossgen }{\del \thk}}

\newcommand{\projLTht}{\frac{\del_{\Kt}  \slossgent }{\del \thk}}
\newcommand{\partTh}{\frac{\del \lossgen }{\del \thk}}
\newcommand{\partLTht}{\frac{\del \slossgent }{\del \thk}}
\newcommand{\partLThtp}{\frac{\del \slossgentp }{\del \thk}}

\newcommand{\shortLTht}{\frac{\del L_{t,w} }{\del \thk}}
\newcommand{\shortLprojTht}{\frac{\del_{\Kt}  L_{t,w} }{\del \thk}}

\newcommand{\projM}{\frac{\del_{\Km} \lossgen }{\del \muk}}

\newcommand{\projLMt}{\frac{\del_{\Km}  \slossgent }{\del \muk}}
\newcommand{\partM}{\frac{\del \lossgen }{\del \muk}}
\newcommand{\partLMt}{\frac{\del \slossgent }{\del \muk}}
\newcommand{\partLMtp}{\frac{\del \slossgentp }{\del \muk}}

\newcommand{\shortLMt}{\frac{\del L_{t,w} }{\del \muk}}
\newcommand{\shortLprojMt}{\frac{\del_{\Km}  L_{t,w} }{\del \muk}}

\newcommand{\bTh}{\beta_\theta}
\newcommand{\bMu}{\beta_\mu}
\newcommand{\bThMu}{\beta_{\theta\mu}}

\begin{document}
\title{Achieving Online Regression Performance of LSTMs with Simple RNNs}

\author{N. Mert Vural, Fatih Ilhan, Selim F. Yilmaz, Salih Erg\"{u}t and Suleyman Serdar Kozat, \textit{Senior Member, IEEE}
\thanks{This work is supported in part by TUBITAK Contract No. 117E153.}
\thanks{ N. M. Vural, F. Ilhan, S. F. Yilmaz and S. S. Kozat are with the Department of Electrical and Electronics Engineering, Bilkent University, Ankara 06800, Turkey, e-mail: \{vural, filhan, syilmaz, kozat\}@ee.bilkent.edu.tr.}
\thanks{S. Erg\"{u}t is with the Turkcell Technology, 5G R\&D Team, Istanbul, Turkey, e-mail: salih.ergut@turkcell.com.tr}
\thanks{S. S. Kozat and F. Ilhan are also with DataBoss A.S., Bilkent Cyberpark, Ankara, 06800, emails:\{serdar.kozat, fatih.ilhan\}@data-boss.com.tr. }
}

\maketitle
\begin{abstract}
Recurrent Neural Networks (RNNs) are widely used for online regression due to their ability to generalize nonlinear temporal dependencies. As an RNN model, Long-Short-Term-Memory Networks (LSTMs) are commonly preferred in practice, as these networks are capable of learning long-term dependencies while avoiding the vanishing gradient problem. However, due to their large number of parameters, training LSTMs requires considerably longer training time compared to simple RNNs (SRNNs). In this paper, we achieve the online regression performance of LSTMs with SRNNs efficiently. To this end, we introduce a first-order training algorithm with a linear time complexity in the number of parameters. We show that when SRNNs are trained with our algorithm, they provide very similar regression performance with the LSTMs in two to three times shorter training time. We provide strong theoretical analysis to support our experimental results  by providing regret bounds on the convergence rate of our algorithm. Through an extensive set of experiments, we verify our theoretical work and demonstrate significant performance improvements of our algorithm with respect to LSTMs and the other state-of-the-art learning models.
\end{abstract}
\begin{keywords}
Online learning, neural network training, recurrent neural networks, regression, online gradient descent.
\end{keywords}

\section{Introduction}
\subsection{Preliminaries}
Estimating an unknown desired signal is one of the main subjects of interest in the contemporary online learning literature\cite{CB2006}. In this problem, a learner sequentially receives a data sequence related to a desired signal to predict the signal's next value. This problem (also known as online regression) is extensively studied due to its applications in a wide set of problems, e.g., neural network training\cite{Cheng11,Jinz15}, signal processing\cite{LaxhammarF14}, and machine learning\cite{LSTMSearch}. In these studies, nonlinear approaches are commonly employed since linear modeling is inadequate for a wide range of applications due to the constraints of linearity. 


For online regression, there exists a wide range of established nonlinear approaches in the machine learning literature\cite{Kivinen04,tree}. However, these approaches usually suffer from prohibitive computational requirements and may provide poor performance due to overfitting and stability issues\cite{overfit}. Adopting neural networks is another method due to their high generalization capacity\cite{Specht91}. However, neural-network-based algorithms are shown to provide inadequate performance in certain applications\cite{Zhao19}. To overcome these limitations, neural networks composed of multiple layers, i.e., deep neural networks (DNNs), have been introduced. In the last few years, DNNs have led to outstanding performance on a variety of problems, such as visual recognition, speech recognition, and natural language processing\cite{lecun2015}. Consequently, they have become a widely accepted tool for applications that require nonlinear data processing. 

On the other hand,  DNNs lack temporal memory. Therefore, they provide only limited performance in processing temporal data and time series, which are commonly encountered in online regression problems\cite{Boden05}. To remedy this issue,  recurrent neural networks (RNNs) are used, as these networks  store history in their state representation. However, simple RNNs (SRNNs) are shown to incur either exponential growth or decay in the norm of gradients, which are the well-known exploding and vanishing gradient problems, respectively\cite{Bengio94}. Therefore, they are insufficient to capture long-term dependencies, which significantly restricts their performance in real-life applications. To resolve this issue, a novel RNN architecture with several control structures, i.e., long short-term-memory network (LSTM), was introduced~\cite{LSTM}. Through many variants, LSTMs have seen remarkable empirical success in a broad range of sequential learning tasks, including online regression~\cite{Gers2002,GRU,Tolga2018}.

LSTMs maintain constant backward flow in the error signal to overcome the vanishing gradient problem. They utilize three gates, i.e., input, forget, and output gates, to regulate the information flow through the next steps. In each gate, an independent set of weights, i.e., gate weights, are employed to learn the optimal information regulation in a data-dependent manner. With this structure, LSTMs enhance the regression performance of SRNNs dramatically.  However, LSTMs require a comparably large number of parameters to provide this improvement, which makes them considerably more demanding in terms of computational and data requirements compared to SRNNs\cite{Tolga2018}. 

In this study, we obtain the regression performance of LSTMs with significantly less training time by using SRNNs.  In particular, we introduce an efficient first-order training algorithm, which requires only a linear time complexity in the number of parameters, and show that when our algorithm is applied to SRNNs, it achieves the online regression performance of the LSTMs. Since SRNNs have much fewer parameters compared to LSTMs, our algorithm provides that equivalent performance \emph{in two to three times shorter training time}. We note that to the best of our knowledge, as the first time in the literature, we obtain the online regression performance of LSTMs with SRNNs by using a first-order training algorithm. In the following, we justify our results by providing theoretical derivations and strong regret bounds on the convergence rate of our algorithm. Moreover, through an extensive set of experiments on real and synthetic data, we verify our theoretical work and demonstrate the performance improvements of our algorithm with respect to LSTMs and the state-of-the-art learning models.

\subsection{Prior Art and Comparison}

RNNs are a class of models with an internal memory due to recurrent feed-back connections, which makes them suitable for dealing with sequential problems, such as online prediction or regression\cite{Elman90}. However, SRNNs trained with Stochastic Gradient Descent (SGD) have difficulty in learning long-term dependencies encoded in the input sequences due to the vanishing gradient problem\cite{Bengio94}. The problem has been addressed by using a specialized architecture, namely LSTM, which maintains constant backward flow in the error signal controlled by several gates to regulate the information flow inside the network\cite{LSTM}. However,  the cost of utilizing additional gating structures is high computational complexity and larger training data to effectively learn significantly more parameters than SRNNs\cite{Tolga2018}. Although there exist models such as Gated Recurrent Units (GRUs), which use weight sharing to reduce the total number of parameters, there is still three to five times difference between the parameter size of GRUs/LSTMs and SRNNs for a fixed hidden-layer size\cite{LSTM,GRU,Gers2002}. We point out that the difference in the number of parameters increases the training time of GRUs/LSTMs with the same ratio (if not more) compared to SRNNs.

There exists a wide range of work for obtaining the performance of the LSTMs with simpler models\cite{SutskeverH10,narxrnn}. One of the most common approaches to that end is allowing SRNNs to have direct connections from the distant past to enable them to handle long-term dependencies.  One model in this line is the NARX-RNN, which introduces an additional set of recurrent connections with time lags of $2,3, \cdots , k$ time steps\cite{narxrnn}. However, NARX-RNNs are extremely inefficient, as both parameter counts and computation counts grow by the same factor $k$\cite{SutskeverH10}. A more efficient model with a similar structure is Clockwork RNNs, i.e., CW-RNNs\cite{cwrnn}. The main idea of CW-RNNs is splitting the weights and hidden units of SRNNs into partitions, each with a distinct period, to maintain an additive error flow. However, CW-RNNs require hidden units to be partitioned a priori, which in practice is difficult to do in any meaningful way, especially in online regression where we usually have a very small amount of a priori information on the data sequence. 

Another approach to improve the performance of SRNNs is to use higher-order properties of the error surface during training. For example, RMSprop utilizes approximate Hessian-based preconditioning, which significantly improves the error performance of SGD\cite{Dauphin15}. Due to its efficiency and performance improvement, RMSprop and its follow-on methods, such as Adam\cite{King14}, are being frequently used in the current deep learning applications. However, these techniques are not sufficient to close the gap between SRNNs and LSTMs in general. As a second-order technique, Kalman filters have shown advantages in bridging long time lags as well\cite{Perez02}. However, this approach requires a quadratic time complexity in the parameter size, which is computationally unfeasible for larger networks. Another attempt to bridge the performance of LSTMs and SRNNs is Hessian Free optimization, an adapted second-order training method that has been demonstrated to work well with SRNNs\cite{Martens2011}. However, the Hessian Free algorithm requires a large size of batches to approximate the Hessian matrix, which makes it impractical for the online settings.

In this paper, we obtain the online regression performance of LSTMs with SRNNs by using a first-order training algorithm. Our study differs from the model-based studies since we do not use any complex model rather than the standard Elman Network. Moreover, our algorithm is fundamentally different from the Kalman Filter and Hessian Free algorithms since it is a truly online first-order training algorithm. In the experiments, we show that our algorithm provides very similar performance with LSTMs trained with Adam, RMSprop and SGD.  Therefore, we introduce a highly practical algorithm, which provides comparable performance with the state-of-the-art methods in a highly efficient manner.

\subsection{Contributions}
Our contributions can be summarized as follows:
\begin{itemize}
\item To the best of our knowledge, we, as the first time in the literature, obtain the online regression performance of LSTMs with SRNNs by using a first-order training algorithm.
\item Our algorithm requires only a linear time complexity in the number of parameters, i.e., the same time complexity as SGD\cite{Will95}.
\item Since SRNNs have four times fewer parameters compared to LSTMs for a fixed hidden layer size, our algorithm obtains the performance of LSTMs in two to three times shorter training time.
\item We support our experimental results with strong theoretical analysis and prove that our algorithm converges to the local optimum parameters. In our proofs, we avoid using statistical assumptions, which makes our algorithm applicable to a wide range of adaptive signal processing applications, such as time-series prediction~\cite{lstmdekf} and object tracking~\cite{Coskun17}.
\item Through an extensive set of experiments on real and synthetic data, we verify our theoretical work and demonstrate significant performance improvements of our algorithm with respect to LSTMs and the state-of-the-art training methods.
\end{itemize}

\subsection{Organization of the Paper}
This paper is organized as follows: In Section \ref{sec:model}, we formally introduce the online regression problem and describe our RNN model. In Section \ref{sec:main}, we  develop  a first-order optimization algorithm with a theoretical convergence guarantee. In Section \ref{sec:sim}, we verify our results and demonstrate
the performance of our algorithm with numerical simulations. In Section \ref{sec:concl}, we conclude our paper with final remarks. 

\section{Model and Problem Definition} 
\label{sec:model}
All vectors are column vectors and denoted by boldface lower case letters. Matrices are represented by boldface capital letters. We use  $\norm{\cdot}$ to denote the $\ell_2$ vector or matrix norms depending on the argument. We use bracket notation $[n]$ to denote the set of the first $n$ positive integers, i.e., $[n]= \{ 1, \cdots, n \}$.

We study online regression. In this problem, we observe input variables $\{\textbf{x}_t\}_{t \geq 1}$ at each time step in a sequential manner. After receiving the vector $\xt$, we produce an estimate $\dht$ for the next unobserved target value $\dt$.\footnote{For mathematical convenience, we assume $\textbf{x}_t \in [-1,1]^{\nx}$ and $\dt \in  [-\sqrt{\nh},\sqrt{\nh}]$, where $\nh \in \mathbbm{N}$ is the hidden-size of the network. However, our derivations can be extended  to any bounded input and output sequences after shifting and scaling the magnitude.} After declaring our estimate, we observe $\dt$ and suffer the loss $\ell(\dt,\dht)$. Our aim is to optimize the network with respect to the loss function $\ell(\cdot,\cdot)$ in an online manner. In this study, we particularly work with the squared loss, i.e., $\ell(\dht,\dt)= 0.5 (\dt - \dht)^2$, where we scale the squared loss with $0.5$ for mathematical convenience when computing the derivatives. However, our work can be directly extended to the  cross-entropy loss, which is detailed in Appendix A. 

In this paper, we study online regression with nonlinear state-space models. As the parametric nonlinear state-space model, we use the SRNNs (or the standard Elman network model), i.e.,
\begin{align}
&\htn = \tanh(\W \htm + \U \xt ) \label{model1} \\
&\dht =  \C^T \htn.\label{model2}
\end{align}
Here, we have $\W \in \mathbbm{R}^{\nh \times \nh}$ and $\U \in \mathbbm{R}^{\nh \times \nx}$ as the hidden layer weight matrices. We have $\C \in \mathbbm{R}^{\nh}$, with $\norm{\C} \leq 1$, as the output layer weights.\footnote{Note that for the convergence of the training, the output-layer weights, i.e., $\C$, should stay bounded. To this end, our algorithm performs projection onto a convex set to keep $\C$ bounded. As the convex set, we use  $\{ \C: \norm{\C} \leq 1\}$ for mathematical convenience in our proofs.  However, our derivations can be extended to any  bounded set of $\C$ by shifting and scaling the magnitude.} Moreover, $\htn \in [-1,1]^{\nh}$ is the hidden state vector, $\xt \in [-1,1]^{\nx}$ is the input vector, $\dht \in [-\sqrt{\nh},\sqrt{\nh}]$ is our estimation, and $\tanh$ applies to vectors point-wise.  We note that  although we do not explicitly write the bias terms, they can be included in (\ref{model1})-(\ref{model2}) by augmenting the input vectors with a constant dimension. 

\section{Algorithm Development}
\label{sec:main}
In this section, we introduce the main contribution of our paper, i.e., a first-order algorithm that achieves the online regression performance of LSTMs with SRNNs. We develop our algorithm in three subsections. In the first subsection, we describe our approach and provide definitions for the following analysis. In the second subsection, we present the auxiliary results that will be used in the development of the main algorithm. In the last subsection, we introduce our algorithm and provide a regret bound on its convergence rate.

\subsection{Online Learning Approach}
\begin{figure*}[t!]
\hspace{10mm}
\begin{tikzpicture}[
    font=\sf \scriptsize,
    >=LaTeX,
    cell/.style={
        rectangle, 
        rounded corners=1mm, 
        draw,
        very thick,
        },
    invcell/.style={
        rectangle, 
        rounded corners=1mm, 
        draw=none,
        },    
    operator/.style={
        circle,
        draw,
        inner sep=-0.5pt,
        minimum height =.2cm,
        },
    function/.style={
        ellipse,
        draw,
        inner sep=1pt
        },
    ct/.style={
        },
    gt/.style={
        rectangle,
        draw,
        thick,
        minimum width=4mm,
        minimum height=3mm,
        inner sep=1pt
        },
    mylabel/.style={
        font=\scriptsize\sffamily
        },
    ArrowC1/.style={
        rounded corners=.25cm,
        thick,
        },
    ArrowC2/.style={
        rounded corners=.5cm,
        thick,
        },
    ]
	
    \node [cell, minimum height =1.75cm, minimum width=2.5cm, align=center] (cell1) at (-7.25,0){SRNN pass -defined in (\ref{model1})-\\   parametrized by \\ $\thk_t$ and $\muk_t$} ;
    \node [cell, minimum height =1.75cm, minimum width=2.5cm, align=center] (cell2) at (-3.75,0){SRNN pass \\   parametrized by \\ $\thk_t$ and $\muk_t$} ;
    \node [invcell, minimum height =1.75cm, minimum width=2.5cm, align=center] (cell3) at (-0.5,0){\Large{$\cdots$}} ;
    \node [cell, minimum height =1.75cm, minimum width=2.5cm, align=center] (cell4) at (2.75,0){SRNN pass \\   parametrized by \\ $\thk_t$ and $\muk_t$} ;
    
    \node[ct] (x1) at (-7.25,-1.5) {$\textbf{x}_{1}$};
    \node[ct] (x2) at (-3.75,-1.5) {$\textbf{x}_{2}$};
    \node[ct] (xt) at (2.75,-1.5) {$\textbf{x}_{t}$};
    
    \node[ct] (h0) at (-9.5,0) {$\textbf{h}_{0}$};

    \node [ct, minimum width=1 cm] (out) at (4.375,1.3) {\hspace{3.5mm} $\Ct^T \htn(\thk_t,\muk_t) = \dht$};

	\draw[decoration={brace,mirror,raise=5pt},decorate,thick] (-7.4, -1.2) -- node[left=5pt,align=center] {Input \\ Vectors} (-7.4,- 1.7);  	
 	\draw[decoration={brace,mirror,raise=5pt},decorate,thick]  (-9.55,0.3) -- node[left=5.05pt,align=center] {Initial \\ State} (-9.55,-0.3); 
	\draw[decoration={brace,mirror,raise=5pt},decorate,thick] (-9,-1.65) -- node[below=8.5pt,align=center] {Unfolded version of the SRNN model in (\ref{model1})-(\ref{model2}) over all the time steps up to the current time step $t$. \\ Note that all forward passes share the same parameters, i.e., $\thk_t$ and $\muk_t$.} (4.05,-1.65); 
    \draw[decoration={brace,raise=3pt},decorate,thick] (3.35,1.5) --  node[above=5.1pt,align=center] {Output Layer } (5.25,1.5); 

	\draw [->, thick] (x1.north) -| (cell1.south);
	\draw [->, thick] (x2.north) -| (cell2.south);
	\draw [->, thick] (xt.north) -| (cell4.south);
 
 	\draw [->, thick] (h0.east) -- (h0-|cell1.west);

	\draw [->, thick] (h0-|cell1.east) -- node[above] {$\textbf{h}_1$} (h0-|cell2.west);
	
	\draw [->, thick] (h0-|cell2.east) -- node[above] {$\textbf{h}_2$} (h0-|cell3.west);
	
	\draw [->, thick] (h0-|cell3.east) -- node[above] {$\textbf{h}_{t-1}$} (h0-|cell4.west);

	\draw [->, thick] (h0-|cell4.east) -| node[right] {\normalsize{$\htn(\thk_t,\muk_t)$}} (out.south);

\end{tikzpicture}  \\
 \caption{In this figure, we visually describe $\htn(\thk_t,\muk_t)$. $\htn(\thk_t,\muk_t)$ is defined as the hidden state vector obtained by running the model in (\ref{model1})-(\ref{model2}) with $\thk_t$ and $\muk_t$ from the initial time step up to the current time step $t$. In the figure, the SRNN sequence is initialized with a predetermined initial state $\textbf{h}_0$, which is independent of the network weights. Then, the same SRNN forward pass (given in (\ref{model1})) is repeatedly applied to the input sequence $\{\textbf{x}_t\}_{t\geq1}$, where all the iterations are parametrized by $\thk_t$ and $\muk_t$. The resulting hidden vector after $t$ iterations is defined to be $\htn(\thk_t,\muk_t)$. Here, we note that the dependence of $\htn(\cdot,\cdot)$ on $t$ is due to the increased length of the recursion at each time step.}
 \label{fig:rnn_it}
\end{figure*}
We investigate the SRNN-based regression problem in the online learning (or the online optimization) framework \cite{ZinkOGD}. In this framework, data becomes available sequentially, and the learning procedure continues through updating the model as new data instances are accessed. In order model chaotic, non-stationary, or even adversarial environments, no statistical assumptions are made on data. Therefore, the learning procedure is modeled as a game between a learner (or a learning algorithm) and a possibly adversarial environment, where the learner is tasked to predict model parameters from a convex set. In this setting, 
\begin{itemize}
\item First, the learner makes its prediction about model parameters.
\item Then, the (possibly adversarial) environment chooses a loss function.
\item Finally, the learner observes its loss and takes an action (i.e., update model parameters) to minimize its cumulative loss.
\end{itemize}
This procedure is repeated at each round $t$ until all data instances are utilized.

To formulate the SRNN-based regression problem in the online learning setting, let us introduce three new notations: We use $\W_t$, $\U_t$ and $\Ct$ to denote the weights learned by the first $t-1$ data/input pairs. For mathematical convenience, we work with the vectorized form of the weight matrices, i.e., $\thk_t =\vect(\Wt)$ and $\muk_t=\vect(\Ut)$. Moreover, we use $\htn(\thk_t,\muk_t)$ to denote the hidden state vector obtained by running the SRNN model in (\ref{model1})-(\ref{model2}) with $\thk_t$ and $\muk_t$ from the initial time step up to the current time step $t$ (for the detailed description of $\htn(\thk_t,\muk_t)$ see Fig. \ref{fig:rnn_it}).

Then, we construct our learning setting as follows: At each round $t$, the learner declares his prediction $\thk_t$ and $\muk_t$; concurrently, the adversary chooses a target value $\dt \in [-\sqrt{\nh},\sqrt{\nh}]$, an input $\xt \in [-1,1]^{n_x}$, and a weight vector $\Ct$, $\norm{\Ct} \leq 1$; then, the learner observes the loss function
\begin{equation}
\ell_t(\thk_t,\muk_t) \coloneqq 0.5 \big( \dt - \underbrace{\Ct^T \htn(\thk_t,\muk_t)}_{\dht} \big)^2 \label{def:loss}
\end{equation} 
and suffers the loss $\ell_t(\thk_t,\muk_t)$. This procedure of play is repeated across $T$ rounds, where $T$ is the total number of input instances. We note that we constructed our setting for adversarial $\Ct$ selections for mathematical convenience in our proofs. However, since the selected $\ell_t(\thk_t,\muk_t)$ is convex with respect to $\Ct$, we will use the online gradient descent algorithm \cite{ZinkOGD} to learn the optimal output layer weights (simultaneously with the hidden layer weights) during the training.

Since in the online learning framework, no statistical assumptions are made on the input/output sequences, the performance of the algorithms is analyzed  with the notion of \emph{regret}. However, the standard regret definition for the convex problems is intractable in the non-convex settings due to the NP-hardness of the non-convex global optimization \cite{HazanNonConvex}. Therefore, we use the notion of \emph{local regret} recently introduced by Hazan et al.\cite{HazanNonConvex}, which quantifies the objective of predicting points with a small gradient on average.

To formulate the local regret for our setting, we first define the projected partial derivatives of $\lossgen$ with respect to $\thk$ and $\muk$ as follows:
\begin{align}
&\projTh \coloneqq \frac{1}{\eta} \Big( \thk - \Pi_{\Kt} \! \Big[\thk - \eta \partTh \Big] \Big) \label{def:projTh} \\
&\projM \! \coloneqq \frac{1}{\eta} \Big( \muk - \Pi_{\Km} \! \Big[\muk - \eta \partM \Big] \Big),\label{def:projM}
\end{align} 
where $\del_{\mathcal{K}}$ denotes the projected partial derivative operator defined with some convex set $\mathcal{K}$ and some learning rate $\eta$. The operators $\Pi_{\Kt}\![\cdot]$ and $\Pi_{\Km}\![\cdot]$ denote the orthogonal projections onto $\Kt$ and $\Km$.

We define the time-smoothed loss at time $t$, parametrized by some window-size $w \in [T]$, as
\begin{equation}
\slossgen \coloneqq \frac{1}{w} \sum_{i=0}^{w-1} \ell_{t-i} (\thk,\muk). \label{def:tsloss}
\end{equation}
Then, we define the local regret as
\begin{equation}
R_w(T) \! \coloneqq \! \sum_{t=1}^T \! \Big( \bnorm{\projLTht}^2 \! + \! \bnorm{\projLMt}^2 \Big). \label{def:locreg}
\end{equation}
Our aim is to derive a sublinear upper bound for $R_w(T)$ in order to ensure the convergence of our algorithm to the locally optimum weights. However, before the derivations, we first present the auxiliary results, i.e., the Lipschitz and smoothness properties of $L_{t,w}(\thk,\muk)$, which will be used in the convergence proof of our algorithm.

\subsection{Lipschitz and Smoothness Properties}
In this section, we derive the Lipschitz and smoothness properties of the time-smoothed loss function $L_{t,w}(\thk,\muk)$. 

We note that $L_{t,w}(\thk,\muk)$ is defined as the average of the most recent $w$ instant loss functions (see (\ref{def:tsloss})), where the loss function $\loss$ recursively depends on $\thk$ and $\muk$ due to $\htn(\thk,\muk)$ (see (\ref{def:loss}) and Fig. \ref{fig:rnn_it}). We emphasize that since we are interested in online learning, this recursion can be infinitely long, which might cause  $L_{t,w}(\thk,\muk)$  to have \emph{unboundedly large derivatives}. On the other hand, online algorithms naturally require \emph{loss functions with bounded gradients} to guarantee convergence. 

Therefore, in this part, we analyze the recursive dependency of  $L_{t,w}(\thk,\muk)$ on $\muk$ and $\thk$. We derive sufficient conditions for its derivatives to be bounded and find the explicit formulations of the smoothness constants of $L_{t,w}(\thk,\muk)$ in terms of the model parameters. To this end, we first derive the Lipschitz properties of $\htn(\thk,\muk)$, and observe the effect of infinitely long recursion on the derivatives of $L_{t,w}(\thk,\muk)$.
\begin{lemma}
\label{lem:auxdif}
Let $\W$, $\W'$,$\U$, $\U'$ satisfy $\norm{\W} , \norm{\W'} \leq \lambda$, and $\norm{\U} , \norm{\U'} \leq \lambda$. Let $\htn(\thk,\muk)$ and $\htn(\thk', \muk')$ be the state vectors obtained at time $t$ by running the model in (\ref{model1})-(\ref{model2}) with the matrices $\W$, $\U$, and $\W'$, $\U'$ on common input sequence $\{\textbf{x}_1,\textbf{x}_2,\cdots,\textbf{x}_{t}\}$, respectively. If $\textbf{h}_0(\thk, \muk)=\textbf{h}_0(\thk', \muk')$, then
\begin{equation}
\label{eq:dif}
\norm{\htn(\thk, \muk) - \htn(\thk', \muk')} \! \leq \! \sum_{i=0}^{t-1} \lambda^i \big( \sqrt{\nh} \norm{\thk-\thk'} + \sqrt{\nx}\norm{\muk-\muk' } \big).
\end{equation}    
\end{lemma}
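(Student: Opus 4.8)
The plan is to prove the bound by induction on $t$, tracking the error $e_t := \norm{\htn(\thk,\muk) - \htn(\thk',\muk')}$ between the two trajectories. The base case is immediate: since both recursions are initialized at the common state $\textbf{h}_0$, we have $e_0 = 0$, which matches the empty sum on the right-hand side of (\ref{eq:dif}). The heart of the argument is a one-step inequality expressing $e_t$ in terms of $e_{t-1}$ plus a ``fresh'' contribution coming only from the parameter gap at the current step.

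For the inductive step, I would start from the recursion (\ref{model1}) and use that $\tanh$, applied pointwise, is $1$-Lipschitz (its derivative $1-\tanh^2$ has magnitude at most $1$), so that
\[
e_t \le \norm{\W\htm + \U\xt - \W'\htm' - \U'\xt}.
\]
The key manipulation is the add-and-subtract decomposition
\[
\W\htm - \W'\htm' = \W(\htm - \htm') + (\W - \W')\htm',
\]
which separates the \emph{propagated} error $\W(\htm-\htm')$ that feeds the recurrence from the \emph{source} terms $(\W-\W')\htm'$ and $(\U-\U')\xt$ generated at the current step. Applying the triangle inequality and submultiplicativity of the operator norm then gives
\[
e_t \le \norm{\W}\, e_{t-1} + \norm{\W - \W'}\,\norm{\htm'} + \norm{\U - \U'}\,\norm{\xt}.
\]

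To finish the one-step bound I would insert three estimates: $\norm{\W}\le\lambda$ by hypothesis; the norm matching $\norm{\W-\W'} \le \norm{\thk-\thk'}$ and $\norm{\U-\U'}\le\norm{\muk-\muk'}$, since the $\ell_2$ norm of a vectorized matrix equals its Frobenius norm, which dominates the operator norm; and the state/input bounds $\norm{\htm'}\le\sqrt{\nh}$, $\norm{\xt}\le\sqrt{\nx}$, which follow from $\htm'\in[-1,1]^{\nh}$ and $\xt\in[-1,1]^{\nx}$. This yields the linear recurrence
\[
e_t \le \lambda\, e_{t-1} + \sqrt{\nh}\norm{\thk-\thk'} + \sqrt{\nx}\norm{\muk-\muk'},
\]
and unrolling it from $e_0=0$ produces the geometric factor $\sum_{i=0}^{t-1}\lambda^i$ multiplying the bracketed term, which is exactly (\ref{eq:dif}).

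I expect the only genuinely delicate point to be the add-and-subtract step together with the careful bookkeeping of norms: the hypotheses control $\W,\W'$ in operator norm, but the target is stated in the vectorized ($\ell_2$/Frobenius) norm of $\thk-\thk'$, so one must pass through the inequality $\norm{\cdot}_{\mathrm{op}}\le\norm{\cdot}_F$ and use the boxes $[-1,1]^{\nh}$ and $[-1,1]^{\nx}$ to convert the matrix--vector products into the $\sqrt{\nh}$ and $\sqrt{\nx}$ constants. Everything else---the $1$-Lipschitzness of $\tanh$, the triangle inequality, and solving the geometric recurrence---is routine.
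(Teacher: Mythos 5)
Your proof is correct and delivers exactly the bound in (\ref{eq:dif}), but it is organized differently from the paper's. The paper first introduces a hybrid trajectory $\htn(\thk',\muk)$, run with $\W'$ but the common $\U$, splits the error via the triangle inequality into a $\thk$-only and a $\muk$-only perturbation, and unrolls a separate geometric recursion for each, invoking the one-step bounds (\ref{eq:h}), (\ref{eq:t}) and (\ref{eq:m}) of Proposition~\ref{prop:aux2}. You merge everything into a single induction: the add-and-subtract is performed inside the pre-activation, $\W\htm - \W'\htm' = \W(\htm-\htm') + (\W-\W')\htm'$, so one recurrence $e_t \le \lambda e_{t-1} + \sqrt{\nh}\norm{\thk-\thk'} + \sqrt{\nx}\norm{\muk-\muk'}$ carries both source terms simultaneously, and the only ingredients are the $1$-Lipschitzness of $\tanh$, the domination of the operator norm by the Frobenius (vectorized $\ell_2$) norm, and the box bounds on $\htm'$ and $\xt$. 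Your version is shorter and fully self-contained, avoiding the auxiliary proposition altogether; the paper's version is more modular, since the hybrid-trajectory split and the bounds it cites are precisely the tools reused in the much longer proof of Theorem~\ref{th:lips}, for which Lemma~\ref{lem:auxdif} serves as a warm-up. One hypothesis worth stating explicitly in your induction: at $t=1$ the source term involves $\norm{\textbf{h}_0}$, so you need $\textbf{h}_0 \in [-1,1]^{\nh}$ (equivalently $\norm{\textbf{h}_0}\le\sqrt{\nh}$); the paper's unrolling requires the same, so this is a shared implicit assumption rather than a gap in your argument.
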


\begin{proof}
See the Appendix.
\end{proof}
\begin{remark}
\label{rem:11}
We note that, by (\ref{eq:dif}), to ensure $\htn(\thk,\muk)$  has a bounded gradient with respect to $\thk$ and $\muk$ in an infinite time horizon, $\lambda$ should be in $[0,1)$, i.e., $\lambda \in [0,1)$. In this case, the right hand side of (\ref{eq:dif}) becomes bounded, i.e.,
\begin{equation}
\norm{\htn(\thk, \muk) - \htn(\thk', \muk')} \! \leq \! \frac{\sqrt{\nh}}{1-\lambda} \norm{\thk-\thk'} + \frac{\sqrt{\nx}}{1-\lambda} \norm{\muk-\muk' }
\end{equation}    
for any $t \in [T]$. 

We recall that $L_{t_w}(\thk,\muk)$ is dependent on $\htn(\thk,\muk)$ due to (\ref{def:loss}) and (\ref{def:tsloss}). Hence, to ensure the derivatives of $L_{t_w}(\thk,\muk)$ stay bounded, we need to constrain our parameter space as $\mathcal{K}_\theta=\{ \vect(\W) : \norm{\W} \leq \lambda \}$ and $\mathcal{K}_\mu=\{ \vect(\U) : \norm{\U} \leq \lambda \}$ for some $\lambda \in [0,1)$. Note that since $\Kt$ and $\Km$ are convex sets for any $\lambda \in [0,1)$, our constraint does not violate the setting described in the previous subsection.
\end{remark}

Now that we have found $\lambda \in [0,1)$ is sufficient for $L_{t,w}(\thk,\muk)$ to have bounded derivatives in any $t \in [T]$, in the following theorem, we provide its  smoothness constants.

\begin{theorem}
\label{th:lips}
Let $\thk = \vect(\W)$ and $\muk = \vect(\U)$ , where $\W$ and $\U$ satisfy $\norm{\W} \leq \lambda$, and $\norm{\U} \leq \lambda$ for some $\lambda \in [0,1)$. Then, $L_{t,w}(\thk,\muk)$ has the following Lipschitz and smoothness properties:
\begin{align}
&1 ) \: \bnorm{ \frac{\del^2 \! L_{t,w}(\thk,\muk)}{\del \thk^2}} \leq \beta_{\theta}, \textrm{ where } \bTh = \frac{4 \nh \sqrt{\nh}}{(1-\lambda)^3}. \label{th1:3} \\
&2 ) \: \bnorm{ \frac{\del^2 \! L_{t,w}(\thk,\muk)}{\del \muk^2}} \leq \beta_{\mu}, \textrm{ where } \bMu = \frac{4 \nx \sqrt{\nh}}{(1-\lambda)^3}. \label{th1:4} \\
&3 ) \: \bnorm{ \frac{\del^2 \! L_{t,w}(\thk,\muk)}{\del \thk \del \muk} } \! \leq \! \bThMu, \textrm{ where } \bThMu \! = \! \frac{4 \nh  \sqrt{\nx}}{(1-\lambda)^3}. \label{th1:5}
\end{align}
\end{theorem}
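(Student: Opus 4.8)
The plan is to first reduce the statement to a bound on a single instantaneous loss, and then to bound the operator norm of its Hessian by estimating a scalar second directional derivative. Since $\slossgen=\frac{1}{w}\sum_{i=0}^{w-1}\ell_{t-i}(\thk,\muk)$ and the Hessian is a linear operator, the triangle inequality gives $\bnorm{\frac{\del^2 \slossgen}{\del\thk^2}}\le \frac{1}{w}\sum_{i=0}^{w-1}\bnorm{\frac{\del^2 \ell_{t-i}}{\del\thk^2}}$, so it suffices to bound the Hessian of a generic term $\ell_\tau(\thk,\muk)=\tfrac12\big(d_\tau-\C^T\htt(\thk,\muk)\big)^2$ by $\bTh$ (and analogously for the other two blocks); the bound will be uniform in $\tau$, so the window average inherits it. To control $\bnorm{\frac{\del^2\ell_\tau}{\del\thk^2}}$ I would fix an arbitrary unit direction, move $\thk$ along it with a scalar parameter $s$, and bound the second directional derivative $\frac{d^2}{ds^2}\ell_\tau$, whose supremum over unit directions equals the operator norm of the Hessian block.

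Writing $e\coloneqq d_\tau-\C^T\htt$ and differentiating $\ell_\tau=\tfrac12 e^2$ twice in $s$ gives $\frac{d^2}{ds^2}\ell_\tau=\big(\tfrac{de}{ds}\big)^2+e\,\tfrac{d^2e}{ds^2}$, with $\tfrac{de}{ds}=-\C^T\tfrac{d\htt}{ds}$ and $\tfrac{d^2e}{ds^2}=-\C^T\tfrac{d^2\htt}{ds^2}$. The first-order pieces are immediate from Lemma \ref{lem:auxdif}/Remark \ref{rem:11} applied at time $\tau$: the directional derivative $\tfrac{d\htt}{ds}$ is the limit of $\tfrac1s(\htt(\thk+s\mathbf u,\muk)-\htt(\thk,\muk))$, so $\norm{\tfrac{d\htt}{ds}}\le\frac{\sqrt{\nh}}{1-\lambda}$ and hence $|\tfrac{de}{ds}|\le\frac{\sqrt{\nh}}{1-\lambda}$ using $\norm{\C}\le1$. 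The error itself obeys $|e|\le|d_\tau|+\norm{\C}\norm{\htt}\le2\sqrt{\nh}$, since $d_\tau\in[-\sqrt{\nh},\sqrt{\nh}]$ and $\htt\in[-1,1]^{\nh}$. What remains, and is the crux, is a bound on the second directional derivative $\tfrac{d^2\htt}{ds^2}$ of the recursively defined hidden state.

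The hard part is therefore controlling $\tfrac{d^2\htt}{ds^2}$ through the unrolled, and possibly unboundedly long, recursion. I would differentiate $\htt=\tanh(\W\httm+\U\xt)$ twice along $s$. Writing $D_\tau\coloneqq\diag(\tanh'(\cdot))$, this produces a linear recursion of the schematic form
\[
\tfrac{d^2\htt}{ds^2}=D_\tau\,\W\,\tfrac{d^2\httm}{ds^2}+\mathbf r_\tau ,
\]
where the forcing term $\mathbf r_\tau$ collects the $\tanh''$ contribution and the mixed products of $\tfrac{d\W}{ds}$, $\tfrac{d\httm}{ds}$ and $\httm$. Using $\norm{D_\tau}\le1$, $\norm{\W}\le\lambda$, $\norm{\tfrac{d\W}{ds}}\le1$ (the reshaping of a unit direction), $\norm{\httm}\le\sqrt{\nh}$, $\norm{\tfrac{d\httm}{ds}}\le\frac{\sqrt{\nh}}{1-\lambda}$, together with the elementary estimate $\norm{\dot D_\tau}\le2\norm{\tfrac{d\htt}{ds}}$ coming from $\tanh''$, I would show that $\norm{\mathbf r_\tau}$ is bounded uniformly by a quantity of order $\frac{\nh}{(1-\lambda)^2}$. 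Since the initial state is parameter-independent we have $\tfrac{d^2\mathbf h_0}{ds^2}=0$, and unrolling the contraction with factor $\norm{D_\tau\W}\le\lambda<1$ sums a geometric series to give $\norm{\tfrac{d^2\htt}{ds^2}}\le\frac{c\,\nh}{(1-\lambda)^3}$, uniformly in $\tau$. This is exactly where the constraint $\lambda\in[0,1)$ of Remark \ref{rem:11} is indispensable: without contraction the second-order sensitivity would accumulate with the recursion length.

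Assembling the pieces, the dominant contribution is $|e|\,\norm{\tfrac{d^2\htt}{ds^2}}$, which after bookkeeping of the constants yields $\big|\frac{d^2}{ds^2}\ell_\tau\big|\le\frac{4\nh\sqrt{\nh}}{(1-\lambda)^3}=\bTh$, the lower-order term $\big(\tfrac{de}{ds}\big)^2=O\!\big(\nh/(1-\lambda)^2\big)$ being absorbed. Taking the supremum over unit directions bounds the Hessian block, and averaging over the window preserves it, giving (\ref{th1:3}). The bounds (\ref{th1:4}) and (\ref{th1:5}) follow by the identical argument: the only structural change is that differentiation with respect to $\muk$ enters the recursion through $\U\xt$ with $\norm{\xt}\le\sqrt{\nx}$, so Lemma \ref{lem:auxdif} supplies the constant $\frac{\sqrt{\nx}}{1-\lambda}$ for $\muk$-directional derivatives; this replaces the appropriate $\sqrt{\nh}$ factors by $\sqrt{\nx}$ and produces $\bMu$ and $\bThMu$, the mixed block being obtained from the mixed second directional derivative $\frac{\del^2}{\del s\,\del r}$ that perturbs $\thk$ and $\muk$ simultaneously.
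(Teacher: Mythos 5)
Your route is genuinely different from the paper's. The paper never forms a second derivative: it writes the BPTT gradient $\frac{\del \ell_t}{\del \thk} = -(\dt-\dht)\,\C^T \sum_{\tau=1}^t \frac{\del \htn}{\del \htt}\frac{\del \htt}{\del \thk}$ at two parameter values $\thk,\thk'$ and bounds the \emph{difference of gradients} by adding and subtracting intermediate terms, invoking the single-iteration estimates of Proposition \ref{prop:aux2}, Lemma \ref{lem:auxdif}, and geometric sums in $\lambda$; smoothness is thus obtained as Lipschitz continuity of the first derivative. Your scheme---reduce to one instantaneous loss, identify the operator norm of each Hessian block with a supremum of second directional derivatives, split $\frac{d^2}{ds^2}\ell_\tau = \big(\frac{de}{ds}\big)^2 + e\,\frac{d^2e}{ds^2}$, and control $\frac{d^2\htt}{ds^2}$ through a contraction recursion with zero initial condition---is structurally sound, gives a bound of the correct order $O\big(\nh\sqrt{\nh}/(1-\lambda)^3\big)$, and is in one respect more careful than the paper: it retains the Gauss--Newton term $\big(\frac{de}{ds}\big)^2$, which the paper's own proof silently discards when passing from (\ref{th12:2}) to (\ref{th12:3}) by treating the two residuals $\dt-\dht$ and $\dt-\dht'$ as a single common factor.

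The gap is in your final bookkeeping step: the claim that the constants come out to exactly $\bTh = \frac{4\nh\sqrt{\nh}}{(1-\lambda)^3}$ is asserted, not derived, and the estimates you invoke cannot produce it. Concretely, with the paper's $2$-smoothness of $\tanh$, $\norm{\frac{d\httm}{ds}}\le\frac{\sqrt{\nh}}{1-\lambda}$, and a unit direction for $\thk$, your forcing term obeys $\norm{\textbf{r}_\tau} \le \frac{2\nh}{(1-\lambda)^2} + \frac{2\sqrt{\nh}}{1-\lambda}$, so the geometric series gives $\norm{\frac{d^2\htt}{ds^2}} \le \frac{2\nh}{(1-\lambda)^3} + \frac{2\sqrt{\nh}}{(1-\lambda)^2}$; multiplying by $|e|\le 2\sqrt{\nh}$ and adding the Gauss--Newton term $\le\frac{\nh}{(1-\lambda)^2}$ yields $\frac{4\nh\sqrt{\nh}}{(1-\lambda)^3} + \frac{5\nh}{(1-\lambda)^2}$, strictly larger than $\bTh$; absorbing the excess into the leading term only produces a coefficient of roughly $9$ rather than $4$, and the same overshoot occurs for $\bMu$ and $\bThMu$. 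This is a constant-factor defect rather than a structural one---Theorem \ref{th:conv} survives with $\beta$ enlarged and the admissible $\eta$ shrunk accordingly---but as written your argument proves a weaker statement than Theorem \ref{th:lips}. To repair it you must either state and carry the explicit larger constants, or sharpen the ingredients (for instance using $\sup_x|\tanh''(x)| = 4/(3\sqrt{3})$ instead of $2$, and tightening the bound on the cross term $2D_\tau \textbf{U}_W \frac{d\httm}{ds}$) and show explicitly how the lower-order $\frac{\nh}{(1-\lambda)^2}$ terms are absorbed; your sketch does neither, and with the cited bounds landing exactly on the stated $\bTh$ is not possible precisely because you keep the Gauss--Newton contribution that the paper's derivation of the constant $4$ omits.
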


\begin{proof}
See the Appendix.
\end{proof}

In the following section, we use these properties to develop an SRNN training algorithm with a convergence guarantee.

\subsection{Main Algorithm}
\begin{algorithm}[t!]
	\fontsize{9}{16}
	\caption{Windowed Online Gradient Descent Algorithm (WOGD)}\label{alg:alg1}
	\begin{algorithmic}[1]
	    \STATE \textbf{Parameters:} \begin{itemize}
	    								\item Learning rate $\eta \in [0,1)$
	    							    \item Window-size $w \in [T]$ 
	    							    \item $\lambda \in [0,1)$ 
	    							  \end{itemize} \label{alg1:param}
		\STATE Initialize $\thk_1$, $\muk_1$, $\C_1$ and $\textbf{h}_0$. \label{alg1:init}
		\STATE Let
			   \begin{itemize}
	    		      \item $\Kt=\{ \vect(\W) : \norm{\W} < \lambda \}$
	    			  \item $\Km=\{ \vect(\U) : \norm{\U} \leq \lambda \}$
	    		      \item $\Kcc= \{ \C : \norm{\C} \leq 1 \}$
	    		    \end{itemize} \label{alg1:def}					      
		\FOR{$t=1$ \TO $T$}
		\STATE Predict $\thk_t$, $\muk_t$ and $\C_t$. \label{alg1:pred}
		\STATE Receive $\xt$ and generate $\dht$. \label{alg1:forward}
		\STATE Observe $\dt$ and the cost function $\ell_t(\thk_t,\muk_t)$. \label{alg1:obs}
		\STATE Updates: \label{alg1:upd}
		\begin{align}
		&\C_{t+1} = \Pi_{\Kcc} \Big[ \C_t - \frac{1}{\sqrt{t}} \frac{\del \slossgent}{\del \Ct} \Big] \label{eq:updc}  \\
		&\thk_{t+1}= \thk_t - \eta \projLTht  \label{eq:updth}\\
		&\muk_{t+1}= \muk_t - \eta \projLMt. \label{eq:updmu}
		\end{align}		
		\ENDFOR
	\end{algorithmic}
\end{algorithm}

In this part, we present our algorithm, namely the Windowed Online Gradient Descent Algorithm (WOGD), shown in Algorithm \ref{alg:alg1}. 

In the algorithm, we take the learning rate $\eta \in [0,1)$,  window-size $w \in [T]$ and $\lambda \in [0,1)$ as the inputs. We, then, define the parameter spaces $\Kt$, $\Km$, and $\Kcc$ in line \ref{alg1:def}. Here, we define $\Kt$ and $\Km$ as given in Remark \ref{rem:11} to ensure that the derivatives of the loss functions are bounded. Furthermore, we define $\Kcc$ as $\Kcc= \{ \C : \norm{\C} \leq 1 \}$  to satisfy our assumption of $\norm{\C} \leq 1$. 

In the learning part, we first predict the hidden layer weight matrices, i.e., $\thk_t$ and $\muk_t$, and the output layer weights, i.e., $\C_t$ (see line \ref{alg1:pred}). Then, we receive the input vector $\xt$ and generate our estimate $\dht$ by running the model in (\ref{model1})-(\ref{model2}). We next observe ground truth value $\dt$ and the loss function $\ell_t(\thk_t,\muk_t)$ in line \ref{alg1:obs}. Having observed the label, we update the weight matrices in line \ref{alg1:upd} (or in (\ref{eq:updc})-(\ref{eq:updmu})). Here, we update the output layer weights $\Ct$ with the projected online gradient descent algorithm \cite{ZinkOGD}. We update the hidden weights in (\ref{eq:updth})-(\ref{eq:updmu}) by using the projected partial derivatives of  the time-smoothed loss function $\slossgent$ defined with $(\Kt,\eta)$ and $(\Km,\eta)$. 

Note that since WOGD optimize the network weights with the time-smoothed loss function (rather than instantaneous loss), it can be intuitively interpreted as SGD with a windowed momentum, as the gradients used to perform updates at iteration $t$ depends on the loss suffered in the past past time steps. However, as opposed to the standard momentum-based approaches, such as Adam and RMSprop, which accumulate the gradient information of the previous time steps with exponential weighting, WOGD repeatedly calculates the gradient of the previous time steps in its window with respect to the network weights at time $t$ at each iteration.

As a further note, since we constructed our setting for adversarial $\Ct$ selections, the update rule for the output layer in (\ref{eq:updc}) does not contradict with our analysis. Moreover, since $\slossgent$ is the average of last $w$ losses, which are all convex with respect to  $\Ct$,  $\slossgent$ is also convex with respect to $\Ct$. Then, by using \cite[Theorem 1]{ZinkOGD}, we can prove the update rule in (\ref{eq:updc}) converges to the best possible output layer weights satisfying $\norm{\C} \leq 1$. Therefore, in the following theorem, we provide the convergence guarantee of WOGD specifically for the hidden layer weights. 
\begin{theorem}
\label{th:conv}
Let $\loss$ and  $\slossgen$ be the loss and time-smoothed loss functions defined in (\ref{def:loss}) and (\ref{def:tsloss}), respectively. Moreover, let  $\beta$ be the maximum possible smoothness constant, i.e.,
\begin{equation}
\label{eq:beta}
\beta = \max\{\beta_\theta, \beta_\mu, \bThMu\},
\end{equation}
where $\beta_\theta$, $\beta_\mu$,  and $\bThMu$ are defined in (\ref{th1:3}), (\ref{th1:4}) and (\ref{th1:5}). Then, if WOGD is run with the parameters
\begin{equation}
\label{eq:params1}
0 < \eta \leq \frac{1}{\beta}
\end{equation}
it ensures that
\begin{equation}
\label{eq:regbound}
R_w(T) \leq \frac{16 \sqrt{\nh}}{\eta} \frac{T}{w} + \frac{16 \sqrt{\nh}}{\eta},
\end{equation}
where $R_w(T)$ is the local regret defined in (\ref{def:locreg}). By selecting a window-size $w$ such that $\frac{T}{w}= o(T)$, one can bound $R_w(T)$ with a sublinear bound, hence, guarantee convergence of the hidden layer weights to the locally optimum parameters.\footnote{We use  little-o notation, i.e., $g(x)= o(f(x))$, to describe an upper-bound that cannot be tight, i.e., $\lim_{ x \to \infty} g(x)/f(x) =0$.}
\end{theorem}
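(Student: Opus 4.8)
The plan is to adopt the local-regret framework of Hazan et al.~\cite{HazanNonConvex}, which is the natural choice here because $\slossgen$ is non-convex in $(\thk,\muk)$ and a classical convex regret bound is therefore out of reach. The idea is that, although we cannot compete with a global minimizer, each WOGD step provably decreases the \emph{current} time-smoothed objective, and the only obstruction to summing these decreases is the drift of the objective as the averaging window slides forward by one step. Accordingly I would (i) prove a one-step descent inequality for the joint update (\ref{eq:updth})--(\ref{eq:updmu}), (ii) telescope the per-step decreases while accounting for the window drift, and (iii) bound the residual drift by the largest attainable instantaneous loss.

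For step (i) I would treat $\slossgent$ as a function of the stacked variable $(\thk,\muk)$ and apply a second-order (descent-lemma) expansion along the displacement $(\thk_{t+1}-\thk_t,\,\muk_{t+1}-\muk_t)$. The quadratic remainder splits into three blocks governed by $\bTh$, $\bMu$, and the mixed block $\bThMu$ from Theorem~\ref{th:lips}; bounding each block and using $\beta=\max\{\bTh,\bMu,\bThMu\}$ collapses the remainder into a single multiple of $\bnorm{\thk_{t+1}-\thk_t}^2+\bnorm{\muk_{t+1}-\muk_t}^2$. Combining this with the variational inequalities satisfied by the two Euclidean projections in (\ref{def:projTh})--(\ref{def:projM}), namely $\banglep{\shortLTht,\projLTht}\ge\bnorm{\projLTht}^2$ and its analogue in $\muk$, and invoking the step-size restriction $\eta\le 1/\beta$ from (\ref{eq:params1}) to keep the second-order remainder dominated by the first-order decrease, should give a bound of the form
\begin{equation}
\slossgentp \le \slossgent - \frac{\eta}{2}\Big(\bnorm{\projLTht}^2 + \bnorm{\projLMt}^2\Big).
\end{equation}

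For step (ii), summing this inequality over $t$ gives $\frac{\eta}{2}R_w(T)\le\sum_{t=1}^{T}(\slossgent-\slossgentp)$, which is not yet telescoping because the objective index moves with $t$. I would rewrite it as
\begin{equation}
\sum_{t=1}^{T}\big(\slossgent - \slossgentp\big) = \sum_{t=1}^{T}\big(\slossgent - L_{t+1,w}(\thk_{t+1},\muk_{t+1})\big) + \frac{1}{w}\sum_{t=1}^{T}\big(\ell_{t+1}(\thk_{t+1},\muk_{t+1}) - \ell_{t+1-w}(\thk_{t+1},\muk_{t+1})\big),
\end{equation}
using the identity $L_{t+1,w}-L_{t,w}=\tfrac1w(\ell_{t+1}-\ell_{t+1-w})$ that follows directly from (\ref{def:tsloss}). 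The first sum telescopes to $L_{1,w}(\thk_1,\muk_1)-L_{T+1,w}(\thk_{T+1},\muk_{T+1})\le L_{1,w}(\thk_1,\muk_1)$ by non-negativity of $L_{t,w}$, and each term of the second sum is at most $\tfrac{1}{w}\ell_{t+1}(\thk_{t+1},\muk_{t+1})$. For step (iii) I would bound every loss by $M:=\max\loss$; since $|\dt|\le\sqrt{\nh}$, $\norm{\Ct}\le 1$, and $\htn\in[-1,1]^{\nh}$ give $|\dht|=|\Ct^T\htn|\le\sqrt{\nh}$ and hence $\loss\le\tfrac12(2\sqrt{\nh})^2=2\nh$, both the telescoped term and the per-step drift are controlled by $M$. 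This yields $\frac{\eta}{2}R_w(T)\le M + \frac{M T}{w}$, i.e.\ a bound of exactly the shape of (\ref{eq:regbound}), with the explicit constant emerging from the value of $M$.

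The main obstacle is step (i): unlike a single-block gradient step, the joint update couples $\thk$ and $\muk$, so the mixed second derivative $\bThMu$ must be absorbed carefully enough that a net decrease proportional to $\eta(\bnorm{\projLTht}^2+\bnorm{\projLMt}^2)$ survives under $\eta\le 1/\beta$; getting the cross-term bookkeeping right (and thus the exact leading constant) is the delicate part, as is the minor issue of the warm-up rounds $t<w$ where the window in (\ref{def:tsloss}) is not yet full. Given the bound (\ref{eq:regbound}), the convergence claim is immediate: choosing $w$ with $T/w=o(T)$ makes $R_w(T)=o(T)$, so the average of the squared projected-gradient norms tends to zero, forcing the hidden-layer iterates toward points where the projected gradient vanishes, i.e.\ local optima; the output weights $\Ct$ are handled separately by the convex OGD guarantee already noted after (\ref{eq:updc}).
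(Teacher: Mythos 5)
Your overall skeleton---a one-step descent inequality from smoothness combined with Hazan et al.'s projected-gradient inequality, a telescoping sum corrected by the window-drift identity $L_{t+1,w}-L_{t,w}=\tfrac{1}{w}(\ell_{t+1}-\ell_{t+1-w})$, and a bound on the drift by the largest attainable loss---is exactly the structure of the paper's proof. The place where you genuinely depart from the paper is step (i), and that is where there is a real gap. Because you take the descent step \emph{jointly} in the stacked variable $(\thk,\muk)$, the second-order remainder contains the cross term $\bThMu\norm{\thk_{t+1}-\thk_t}\norm{\muk_{t+1}-\muk_t}$; after absorbing it with Young's inequality, the stacked objective is effectively $2\beta$-smooth rather than $\beta$-smooth, so what your argument actually yields is
\begin{equation*}
\slossgentp \;\leq\; \slossgent \;-\; \eta\,(1-\beta\eta)\Big(\bnorm{\projLTht}^2+\bnorm{\projLMt}^2\Big).
\end{equation*}
Under the theorem's hypothesis $0<\eta\le 1/\beta$ the factor $\eta(1-\beta\eta)$ is merely nonnegative: at $\eta=1/\beta$ it is zero, so your chain produces no regret bound at all, and as $\eta\to1/\beta$ the implied constant $1/\bigl(\eta(1-\beta\eta)\bigr)$ blows up. The per-step decrease of $\tfrac{\eta}{2}\bigl(\bnorm{\projLTht}^2+\bnorm{\projLMt}^2\bigr)$ that you claim therefore does not follow; to obtain it by your route you must strengthen the hypothesis to $\eta\le 1/(2\beta)$, or equivalently accept a doubled constant in (\ref{eq:regbound}). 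You correctly flagged the cross-term bookkeeping as ``the delicate part,'' but it is not a bookkeeping detail---it is precisely the step on which the stated step-size condition (\ref{eq:params1}) and constants hinge, and it is left unresolved.

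The paper avoids the coupling entirely by never taking a joint descent step: it fixes the second argument, applies the descent lemma to the $\thk$-update alone (so only $\bTh\le\beta$ enters, and $\eta-\tfrac{\beta\eta^2}{2}\ge\tfrac{\eta}{2}$ holds for every $\eta\le 1/\beta$), bounds $\sum_{t=1}^{T}\bnorm{\projLTht}^2$, repeats the argument symmetrically for $\muk$, and adds the two bounds; $\bThMu$ appears only inside the conservative definition of $\beta$ in (\ref{eq:beta}), never in the descent analysis itself. (Whether holding $\muk$ fixed while the algorithm simultaneously moves $\muk_t$ is itself fully rigorous is a separate question about the paper's proof, but it is exactly what preserves the factor $\eta/2$ under $\eta\le1/\beta$.) A secondary mismatch: your worst-case loss bound $M=2\nh$ (from $|\dt-\dht|\le 2\sqrt{\nh}$) leads to a final bound of the form $\tfrac{4\nh}{\eta}\bigl(1+\tfrac{T}{w}\bigr)$, i.e., scaling with $\nh$, whereas the paper invokes the bound $|L_{t,w}|\le\sqrt{\nh}$ in its drift step to land on $16\sqrt{\nh}/\eta$ in (\ref{eq:regbound}); so even with step (i) repaired, your route reproduces the shape of the regret bound but not its literal constant.
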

\begin{proof}
See the Appendix.
\end{proof}

Theorem \ref{th:conv} shows that with appropriate parameter selections, WOGD guarantees to learn the locally optimum SRNN parameters for any bounded input/output sequences. Moreover, here, we observe that the window-size of the algorithm directly controls its convergence rate (see (\ref{eq:regbound})). That will the key property of our algorithm to obtain the regression performance of LSTMs with SRNNs.

Now that we have proved the convergence guarantee of WOGD, in the following remark, we investigate the computational requirement of WOGD.

\begin{remark}
\label{rem:compreq}
The most expensive operation of WOGD is the update rule of the hidden layer weights, i.e., (\ref{eq:updth})-(\ref{eq:updmu}), which can also be written in the form as (see (\ref{def:projTh})-(\ref{def:projM}))
\begin{alignat*}{2}
\bm{\hat{\theta}}_{t+1} &= \thk_t - \eta \partLTht  &&\textrm{ - Update} \\
\thk_{t+1} &= \Pi_{\Kt} \Big[ \bm{\hat{\theta}}_{t+1}  \Big] &&\textrm{ - Projection}.
\end{alignat*}
Note that the update part of requires the computation of the partial derivatives of $\slossgen$ with respect to $\thk$ and $\muk$. Additionally, the projection part projects $\bm{\hat{\theta}}_{t+1}$ onto the spectral unit ball $\Kt$, which requires a matrix projection operation. 

To compute the partial derivatives, we use the Truncated Backpropagation Through Time algorithm \cite{Will95}, which has $\bOh{h \nh ( \nh + \nx)}$ computational complexity with a truncation length $h$. Since WOGD uses the partial derivatives of the last $w$ losses, we can approximate to these partial derivatives with a single back-propagation by using a truncation length $w$, which results in $\bOh{w \nh ( \nh + \nx)}$ computational requirement for computing the partial derivatives. 

Furthermore, the projection step can be written as
\begin{equation}
\thk_{t+1} = \argmin_{ \thk \in \Kt} \norm{\thk - \bm{\hat{\theta}}_{t+1} }. \label{eq:proj1}
\end{equation}
By \cite[Proposition 9]{SedghiGL19}, (\ref{eq:proj1}) can be performed by computing the singular value decomposition (SVD) of the matrix form of  $\bm{\hat{\theta}}_{t+1}$ and clipping its singular values with $\lambda$. 

Note that the projection step can be computationally demanding due to computational requirement of SVD. However, it is commonly observed in the literature that when the weights are initialized close to the origin, they generally stay close to the origin during the training~\cite{Martin18implicit,Arora19}. By using this observation,  we can reduce the computational requirement of the projection part by performing projection only when the $\ell_2$ norm of the weights exceed some predetermined threshold $\alpha$, i.e.,
\begin{equation}
\label{eq:ifstat}
\thk_{t+1} = \begin{cases} 
      \argmin_{ \thk \in \Kt} \norm{\thk - \bm{\hat{\theta}}_{t+1} } & \norm{\bm{\hat{\theta}}_{t+1}} > \alpha \\
      \bm{\hat{\theta}}_{t+1} &   \norm{\bm{\hat{\theta}}_{t+1}} \leq \alpha.
   \end{cases}
\end{equation}
Since the weights are expected to stay small during the training, (\ref{eq:ifstat}) is expected to ensure the stability of the weight matrices with a reasonable $\alpha$ selection, such as $\alpha \in [5,10]$, efficiently. Indeed, during the experiments, we observe that when the hidden size is chosen in a reasonable range, i.e., $5 \leq n_h \leq 20$, WOGD performs the projection step very rarely -at most $3$ times in a single simulation. Therefore, the main computational bottleneck of our algorithm is to compute the partial derivatives of $\slossgen$ with respect to $\thk$ and $\muk$, which requires  $\bOh{w \nh ( \nh + \nx)}$, i.e., a linear time complexity in the number of weights. With this complexity, our algorithm has the same computational requirement as SGD\cite{Will95}.
\end{remark}

In the next remark, we discuss the effect of choosing higher learning rate than the theoretically guaranteed one in Theorem 2.

\begin{remark}
\label{rem:lrs}
We note that WOGD is constructed by assuming the worst-case Lipschitz constants derived in Theorem \ref{th:lips}. On the other hand, our experiments suggest that in practice, the landscape of the objective function is generally nicer than what is predicted by our theoretical development. For example, in the simulations, we observe that the smoothness of the error surface is usually $10^4$ to $10^5$ times smaller than their theoretical upper-bounds given in (\ref{th1:3})-(\ref{th1:4}). Therefore, it is practically possible to obtain vanishing regret with WOGD by using much higher learning rate than the theoretically guaranteed one.  Since the regret bound of WOGD is inversely  proportional with the learning rate (see (\ref{eq:regbound})), in the following, we use WOGD with the higher learning rates than suggested in Theorem \ref{th:conv} to obtain faster convergence.
\end{remark}

\section{Experiments}\label{sec:sim}

In this section, we verify our theoretical results and demonstrate the performance improvements of our algorithm. To this end, we use four real-life and two synthetic datasets. We compare our algorithm with three widely used neural-network training algorithms: Adam, RMSprop, and SGD. To illustrate performance differences between the models, we apply Adam, RMSprop, and SGD to SRNNs, LSTMs, and Clockwork RNNs (CWRNNs). Since the main focus of our paper is to obtain the regression performance of the state-of-the-art models with SRNNs, we apply WOGD only to SRNNs. In the following, we use  the prefixes ``SRNN-", ``LSTM-", and ``CWRNN-" to denote to which model the optimization algorithm is applied, such as CWRNN-Adam, LSTM-RMSprop, or SRNN-WOGD. 

In the experiments, we use the most widely used LSTM model, where the activation functions are set to the hyperbolic tangent function, and the peep-hole connections are eliminated\cite{Tolga2018}. We implement CWRNN as instructed in its original publication, and use the exponential clock-timings $\{1,2,4,8,16\}$ in the Real-Life Datasets part, and $\{1,2,4,8\}$ in the Binary Addition part. As the SRNN model, we use the standard Elman model given in (\ref{model1})-(\ref{model2}). Moreover, we implement our algorithm slightly differently than Algorithm \ref{alg:alg1} to obtain its maximum performance. Differing from Algorithm \ref{alg:alg1}, we use $8/\sqrt{t}$ as the output layer learning rate and choose the maximum $\ell_2$ norm of the output layer weights as $2.5$, i.e., $\norm{\C_t} \leq 2.5$ for $t \in [T]$. We emphasize that these changes do not hurt our convergence guarantee as they alter the regret bound in Theorem \ref{th:conv} only by a constant factor.

In all simulations, we randomly draw the initial weights from a Gaussian distribution with zero mean and standard deviation of $0.1$. In all SRNN-WOGD runs, we use $\alpha = 7.5$ and $\lambda = 0.95$. We search the hyperparameters of the learning algorithms over a dense grid.  For each hyperparameter in each optimization method, we repeat the learning procedure ten times with randomly chosen initial parameters (generated with fixed and different seeds), and report the results using the hyperparameters that minimize the mean of the mean squared errors obtained in the ten runs. We run each experiment thirty times with randomly chosen initial parameters (generated with fixed and different seeds), and provide the mean performance of the algorithms.

\subsection{Real-Life Datasets}
\begin{figure*}[t]
\begin{subfigure}[t]{0.5\textwidth}
   \centering
\includegraphics[width=0.9\textwidth]{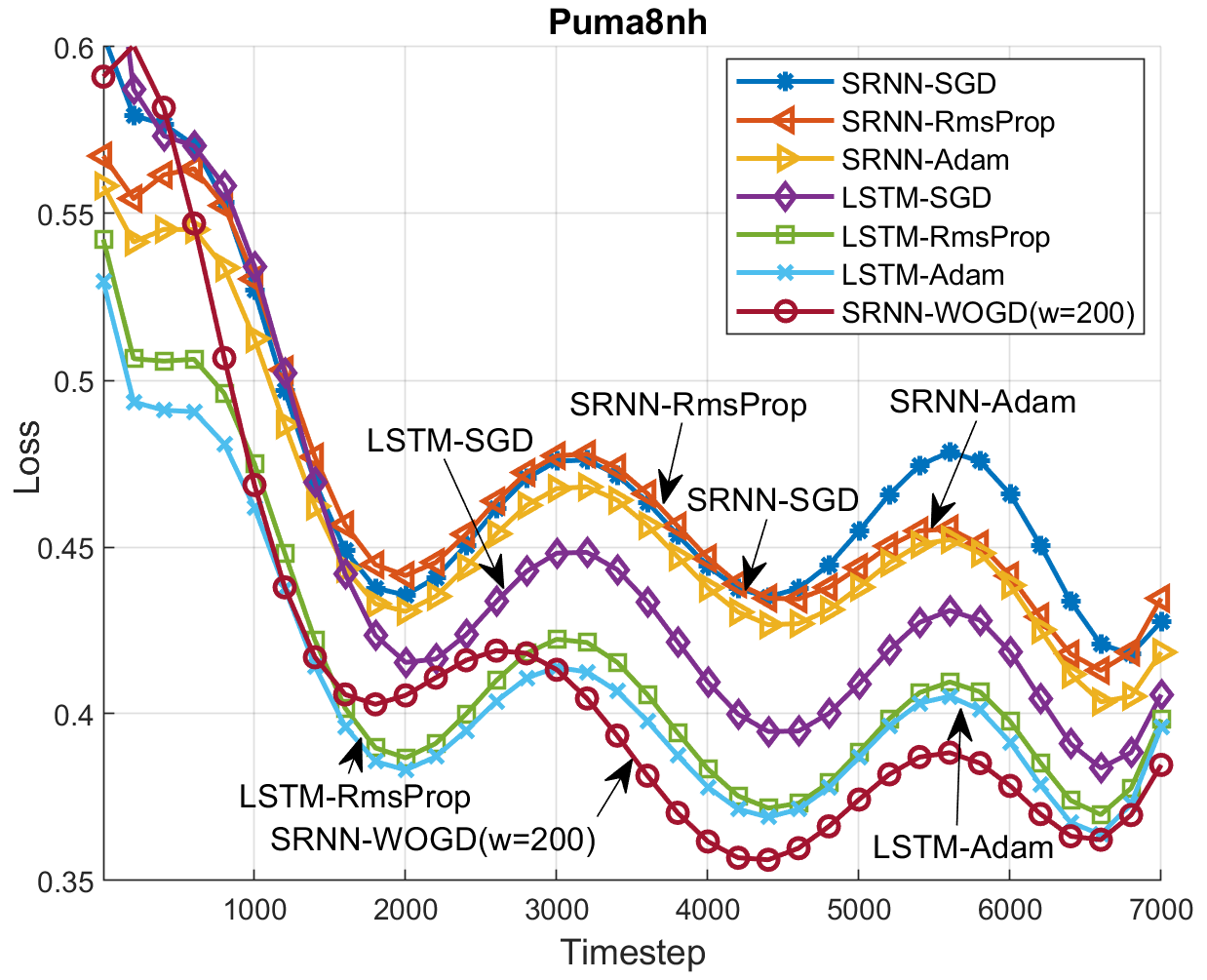}
\caption{} \label{fig1:r1}
\end{subfigure}
 \begin{subfigure}[t]{0.5\textwidth}
   \centering
\includegraphics[width=0.9\textwidth]{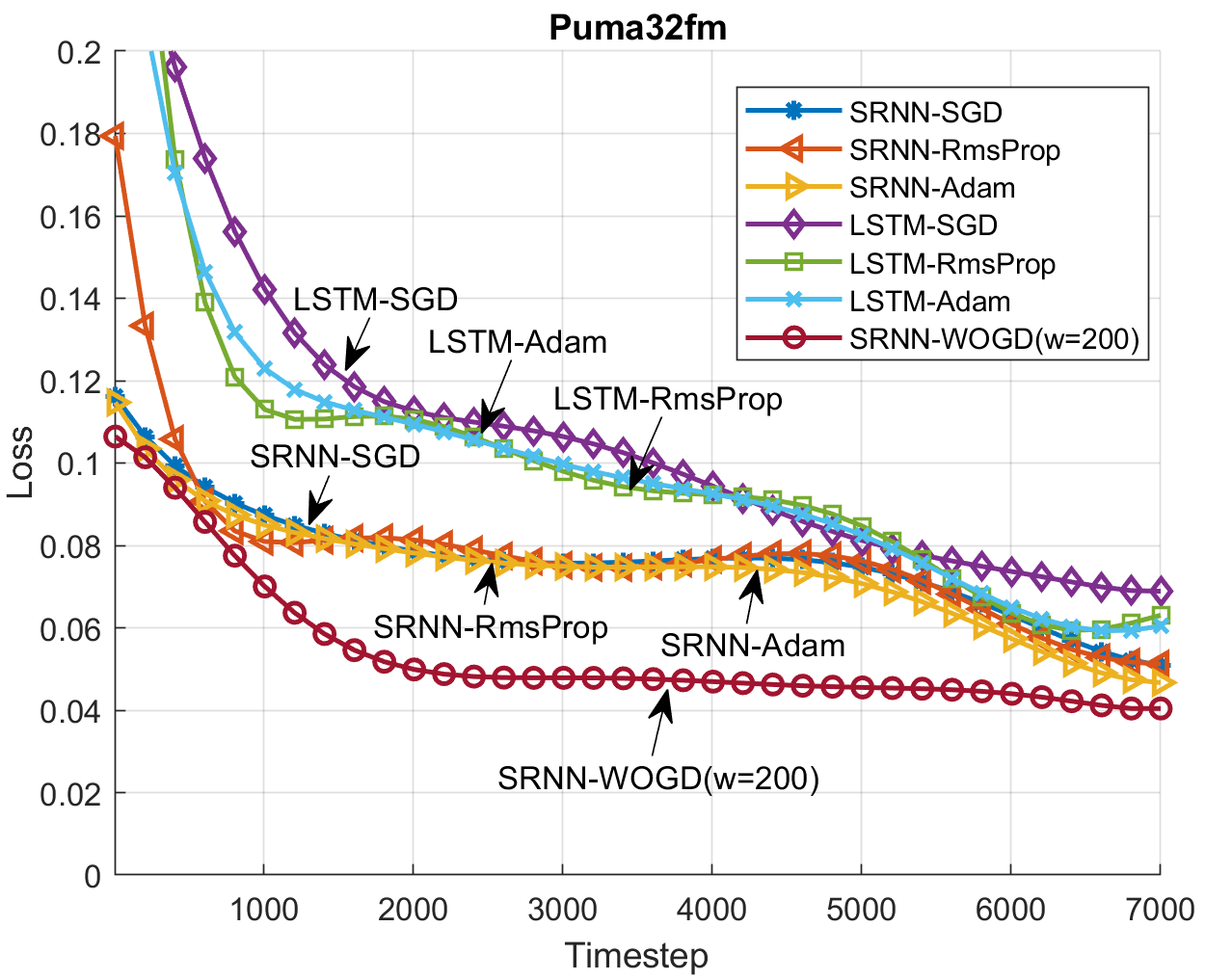}
\caption{} \label{fig1:r2}
\end{subfigure}  \\
\begin{subfigure}[t]{0.5\textwidth}
   \centering
\includegraphics[width=0.9\textwidth]{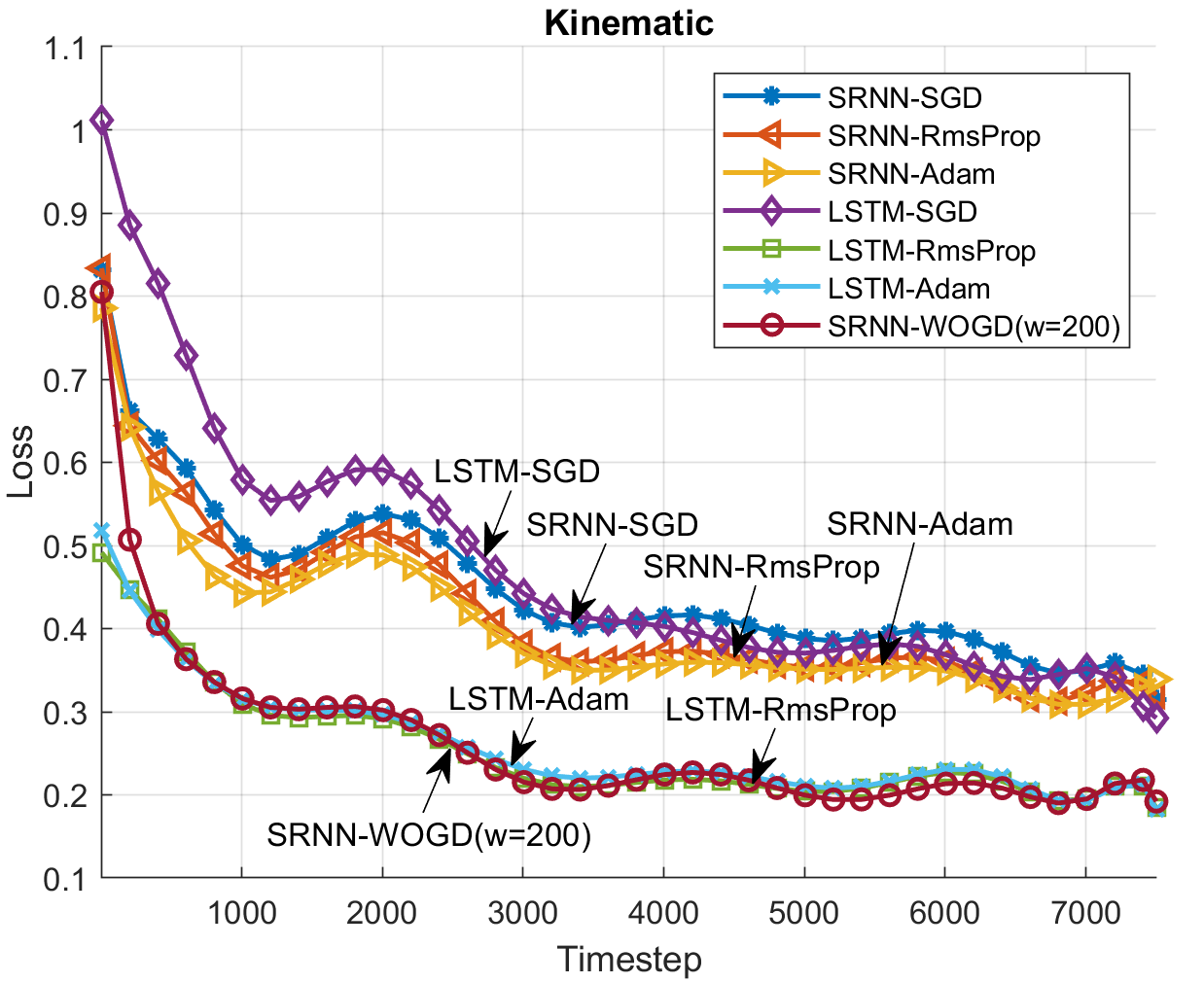}
\caption{} \label{fig1:r3}
\end{subfigure}
\begin{subfigure}[t]{0.5\textwidth}
   \centering
\includegraphics[width=0.9\textwidth]{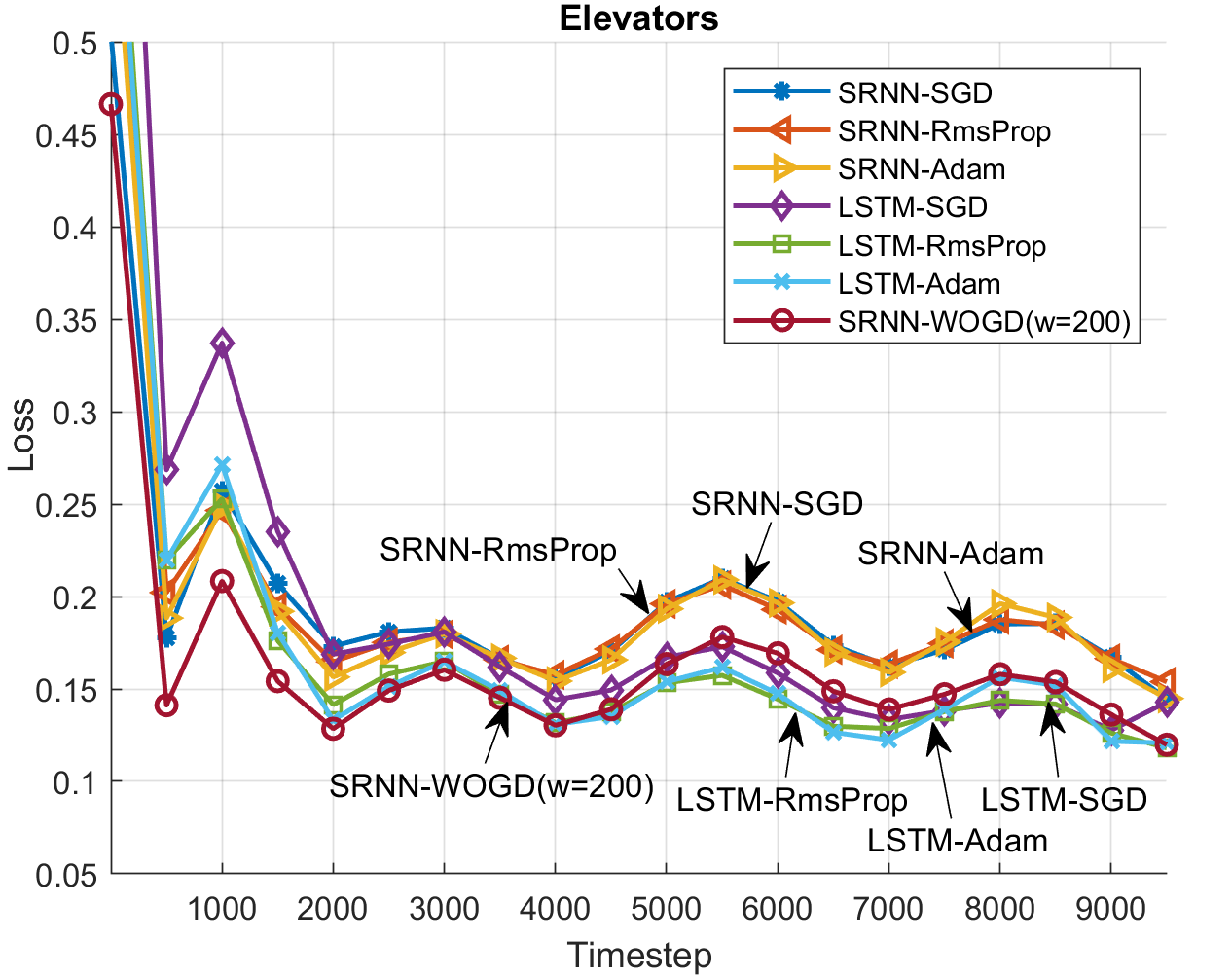}
\caption{} \label{fig1:r4}
\end{subfigure}
\caption{Sequential prediction performances of the algorithms on the (a) puma8nh, (b) puma32fm, (c) kinematic, and (d) elevators datasets. For each model, the learning curve of the training algorithm with the minimum mean squared error is plotted.}\label{fig:1}
\end{figure*}

\begin{table*}[thbp]
  \centering
    \begin{tabular}{|l|c|c|c|c|c|c|c|c|}
    \toprule
    \multicolumn{1}{|p{9.84em}|}{Datasets} & \multicolumn{2}{c|}{Puma8nh} & \multicolumn{2}{c|}{Puma32fm} & \multicolumn{2}{c|}{Kinematics} & \multicolumn{2}{c|}{Elevators} \\
    \midrule
    \multicolumn{1}{|c|}{Algorithms} & MSE   & Run-time(s) & MSE   & Run-time(s) & MSE   & Run-time(s) & MSE   & Run-time(s) \\
    \midrule[1.25pt]
    SRNN-WOGD(w = 50) & 0.456 & 1.32  & 0.068 & 1.40  & 0.346 & 1.31  & 0.183 & 1.72 \\
    \midrule
    SRNN-WOGD(w = 100) & 0.428 & 2.14  & 0.060 & 2.29  & 0.293 & 2.06  & 0.168 & 2.80 \\
    \midrule
    SRNN-WOGD(w = 200) & \textbf{0.408} & \textbf{4.16} & \textbf{0.053} & \textbf{4.66} & \textbf{0.263} & \textbf{4.08} & \textbf{0.158} & \textbf{5.42} \\
    \midrule[1.25pt]
    SRNN-SGD & 0.471 & 0.87  & 0.075 & 0.95  & 0.448 & 0.83  & 0.186 & 1.09 \\
    \midrule
    SRNN-RMSprop & 0.467 & 0.93  & 0.076 & 0.98  & 0.419 & 0.88  & 0.188 & 1.15 \\
    \midrule
    SRNN-Adam & 0.451 & 0.98  & 0.072 & 1.05 & 0.407 & 0.94  & 0.186 & 1.28 \\
    \midrule[1.25pt]
    CWRNN-SGD & 0.469 & 5.62  & 0.065 & 5.64  & 0.426 & 5.79  & 0.185 & 7.18 \\
    \midrule
    CWRNN-RMSprop & 0.453 & 5.48  & 0.066 & 5.62  & 0.386 & 5.73  & 0.184 & 7.21 \\
    \midrule
    CWRNN-Adam & 0.445 & 5.60  & \textbf{0.064} & \textbf{5.80} & 0.383 & 5.82  & 0.181 & 7.54 \\
    \midrule[1.25pt]
    LSTM-SGD & 0.453 & 7.32  & 0.111 & 7.77  & 0.316 & 7.33  & 0.189 & 9.65 \\
    \midrule
    LSTM-RMSprop & 0.412 & 7.80  & 0.101 & 7.85  & \textbf{0.261} & \textbf{7.55} & 0.162 & 10.13 \\
    \midrule
    LSTM-Adam & \textbf{0.408} & \textbf{7.60} & 0.100 & 7.97  & 0.265 & 7.91  & \textbf{0.160} & \textbf{10.10} \\
    \bottomrule
    \end{tabular}%
    \caption{Mean squared errors (MSE) and the corresponding run-times (in seconds) of the compared algorithms. The two algorithms with the lowest two mean errors are emphasized. The simulations are performed on a computer with i7-7500U processor, 2.7-GHz CPU, and 8-GB RAM.}
  \label{tab:perf1}%
\end{table*}%

\begin{figure*}[t]
\begin{subfigure}[t]{0.5\textwidth}
   \centering
\includegraphics[width=0.95\textwidth]{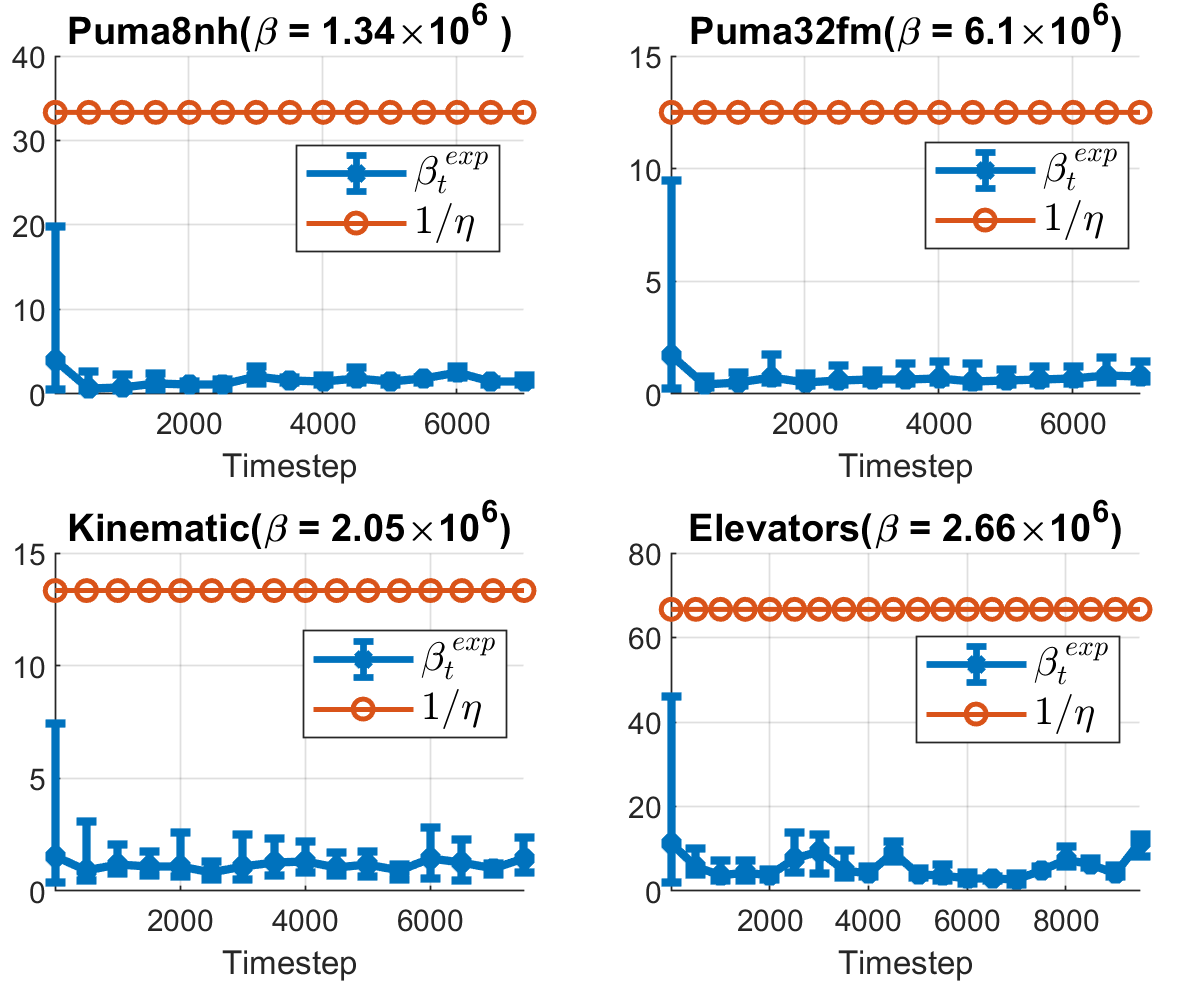}
\caption{} \label{fig2:r1}
\end{subfigure}
 \begin{subfigure}[t]{0.5\textwidth}
   \centering
\includegraphics[width=0.95\textwidth]{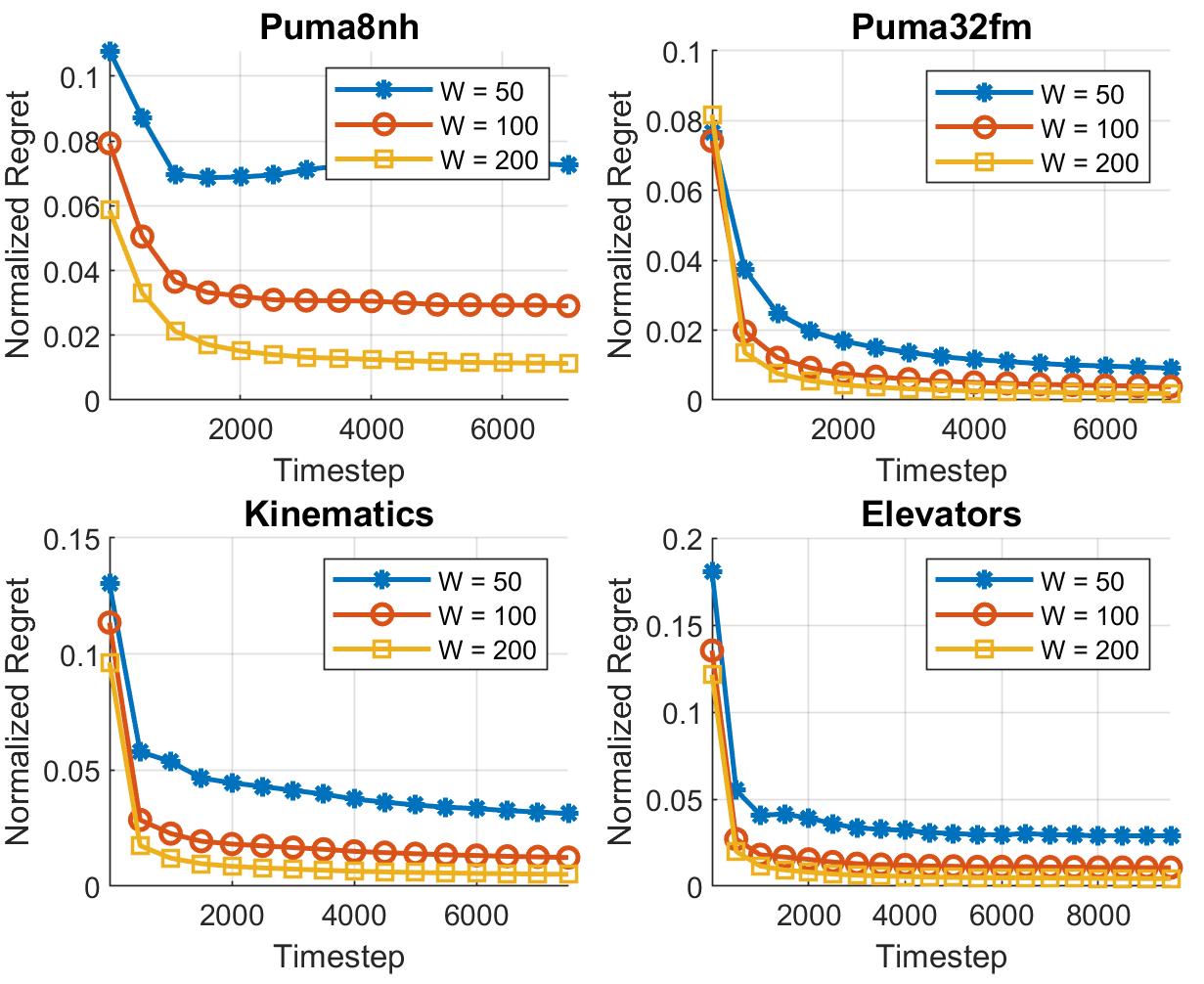}
\caption{} \label{fig2:r2}
\end{subfigure} 
\caption{(a) Comparison between smothnesses the error surfaces and their theoretical upper-bounds formulated in (\ref{eq:beta})-(\ref{eq:params1}). (b) The normalized regret bounds of SRNN-WOGD, i.e., $R_w(t)/t$ for $t \in [T]$, with varying window-sizes.}\label{fig:2}
\end{figure*}

\begin{table*}[t]

\begin{subtable}{\linewidth}
\centering
\begin{tabular}{@{}|c|c|c|c|c|c|c|c|c|@{}}
\toprule
Algorithms & \multicolumn{2}{c|}{RNN-WOGD(w=200)}   & \multicolumn{2}{c|}{LSTM-RMSprop}      & \multicolumn{2}{c|}{CWRNN-RMSprop}   & \multicolumn{2}{c|}{RNN-RMSprop}       \\ \midrule
Net        & Timestep & Run-Time(s)   & Timestep & Run-Time(s) & Timestep & Run-Time(s) & Timestep & Run-Time(s)   \\ \midrule
1          & \textbf{1911}          & 0.73          & 2787                     & 2.23        & 9830                   & 3.58        & 5325                   & \textbf{0.51} \\ \midrule
2          & \textbf{2050}          & \textbf{0.85} & 2828                     & 2.24        & 7189                   & 3.88        & 9535                   & 0.93          \\ \midrule
3          & \textbf{1454}          & \textbf{0.56} & 2588                     & 1.99        & 17298                  & 4.85        & 14331                  & 1.39          \\ \midrule
4          & \textbf{1934}          & \textbf{0.79} & 3799                     & 2.87        & 9336                   & 4.56        & 9079                   & 0.87          \\ \midrule
5          & \textbf{1891}          & \textbf{0.74} & 2607                     & 2.03        & 9104                   & 3.12        & 10125                  & 0.96          \\ \bottomrule
\end{tabular}
\caption{Adding two binary sequences}
\label{tab2:r1}
\end{subtable}
\vspace{1mm}

\begin{subtable}{\linewidth}
\centering
\begin{tabular}{@{}|c|c|c|c|c|c|c|c|c|@{}}
\toprule
Algorithms & \multicolumn{2}{c|}{RNN-WOGD(w=200)}    & \multicolumn{2}{c|}{LSTM-RMSprop}         & \multicolumn{2}{c|}{CWRNN-RMSprop}   & \multicolumn{2}{c|}{RNN-RMSprop}       \\ \midrule
Net        & Timestep & Run-Time(s)    & Timestep & Run-Time(s)    & Timestep & Run-Time(s) & Timestep & Run-Time(s) \\ \midrule
1          & \textbf{19398}         & 9.53           & 22553                    & 16.97          & \multicolumn{2}{c|}{Failed}          & \multicolumn{2}{c|}{Failed}            \\ \midrule
2          & \textbf{18891}         & \textbf{9.03}  & 30062                    & 23.48          & \multicolumn{2}{c|}{Failed}          & \multicolumn{2}{c|}{Failed}            \\ \midrule
3          & 27173                  & 11.99          & \textbf{15171}           & \textbf{11.42} & \multicolumn{2}{c|}{Failed}          & \multicolumn{2}{c|}{Failed}            \\ \midrule
4          & \textbf{21499}         & \textbf{10.46} & 42526                    & 32.38          & \multicolumn{2}{c|}{Failed}          & \multicolumn{2}{c|}{Failed}            \\ \midrule
5          & \textbf{23718}         & \textbf{11.64} & 25623                    & 19.57          & \multicolumn{2}{c|}{Failed}          & \multicolumn{2}{c|}{Failed}            \\ \bottomrule
\end{tabular}
\caption{Adding three binary sequences}
\label{tab2:r2}
\end{subtable}

\caption{Timesteps and the run-times (in seconds) required to achieve $1000$ subsequent error-free predictions, i.e., sustainable prediction. The experiments are repeated with five different input streams and the results are presented in order. The bold font shows the best result (in terms of both timestep and run-time) for each input stream. The simulations are performed on a computer with i7-7500U processor, 2.7-GHz CPU, and 8-GB RAM.} 
\end{table*}

\subsubsection{Pumadyn Datasets}
In the first part,  we consider the pumaydn dataset\cite{Puma}, which includes $7000$ input/output pairs obtained from the simulation of Unimation Puma $560$ robotic arm, i.e., $T=7000$. Here, we aim to estimate the angular acceleration of the arm by using the angular position and angular velocity of the links.  To compare the algorithms under various scenarios, we use two variants of the pumaydn dataset with different difficulties, i.e., \textit{puma8nh}, which is generated with nonlinear dynamics and high noise, and \textit{puma32fm}, which is generated with fairly linear dynamics and moderate noise. 

For the puma8nh dataset,  we use $8$-dimensional input vectors of the dataset with an additional bias dimension, i.e., $\nx=9$, and $10$-dimensional state vectors, i.e., $\nh=10$. In SRNN-Adam, SRNN-RMSprop, and SRNN-SGD, we use the learning rates of $0.005$, $0.007$, and $0.03$, respectively. In CWRNN-Adam, CWRNN-RMSprop, and CWRNN-SGD, we use the learning rates of $0.007$, $0.009$, and $0.03$. For LSTM-Adam, LSTM-RMSprop, and LSTM-SGD, we choose the learning rates as  $0.01$, $0.01$, and $0.007$. In SRNN-WOGD, we use $\eta = 0.03$. To test the effect of the window-size on performance, we run SRNN-WOGD with three different window-sizes, i.e., $w \in \{50,100,200\}$.

We plot the learning curves of the puma8nh experiment in Fig. \ref{fig1:r1}. Here, we see that the LSTM-based methods and SRNN-WOGD(w = 200) outperform the SRNN and CWRNN based state-of-the-art methods while providing very similar performances. We present the mean errors and the run-times of this part in the first column of Table \ref{tab:perf1}. In the table, we observe that as consistent with our theoretical results, the performance of SRNN-WOGD improves as its window-size, i.e., $w$, gets larger. Moreover, we see that LSTM-RMSprop and SRNN-WOGD(w = 200) are on a par error-wise, while SRNN-WOGD(w = 200) achieves the equivalent performance in almost two times shorter run-time. 

For the puma32fm dataset,  we use $32$-dimensional input vectors of the dataset with an additional bias dimension, i.e., $ \nx = 33$, and $10$-dimensional state vectors, i.e., $ \nh=10$. In SRNN-Adam, SRNN-RMSprop, and SRNN-SGD, we choose the learning rates as $0.001$, $0.001$, and $0.01$. In CWRNN-Adam, CWRNN-RMSprop, and CWRNN-SGD, we use the learning rates of $0.002$, $0.002$, and $0.03$. For LSTM-Adam, LSTM-RMSprop, and LSTM-SGD, we use the learning rates of  $0.002$, $0.002$, and $0.065$. In SRNN-WOGD, we use $\eta = 0.08$ and $w \in \{50,100,200\}$.

We plot the learning curves of the puma32fm experiment in Fig. \ref{fig1:r2}. In the figure, we see that SRNN-WOGD(w = 200) converges to small loss values much faster than the other algorithms.  Additionally, we observe that the LSTM-based methods converge to the same loss values as the SRNN and CWRNN based models, yet more slowly due to their higher number of parameters.  We present the mean errors and the run-times of this part in the second column of Table \ref{tab:perf1}.  Here, we see that as in the previous experiment, the error of SRNN-WOGD reduces as its window size increases. Moreover, we observe that SRNN-WOGD(w = 200) provides a considerably smaller mean error compared to the other models due to its relatively fast convergence.

\subsubsection{Kinematic and Elevators Datasets}
In the second part,  we consider the kinematic and elevators datasets, which include $7500$ and $9500$ input/output pairs, respectively\cite{kin,elev}. The kinematic dataset is obtained from a simulation of an eight-link all-revolute robotic arm, and the aim is to predict the distance of the effector from a target. The elevators dataset is obtained from a procedure related to controlling an F16 aircraft, and the aim is to predict the variable that expresses the actions of the aircraft.

For the kinematic dataset,  we use $8$-dimensional input vectors of the dataset with an additional bias dimension, i.e., $ \nx =9$, and $15$-dimensional state vectors, i.e., $ \nh=15$. In SRNN-Adam, SRNN-RMSprop, and SRNN-SGD, we choose the learning rates as $0.007$, $0.007$, and $0.035$. In CWRNN-Adam, CWRNN-RMSprop, and CWRNN-SGD, we use the learning rates of $0.009$, $0.01$, and $0.035$. For LSTM-Adam, LSTM-RMSprop, and LSTM-SGD, we use the learning rates of  $0.009$, $0.01$, and $0.15$. In SRNN-WOGD, we use $\eta = 0.075$ and $w \in \{50,100,200\}$.

For the elevators dataset,  we use $18$-dimensional input vectors of the dataset with an additional bias dimension, i.e., $\nx = 19$, and $15$-dimensional state vectors, i.e., $\nh = 15$. In both SRNN-Adam, SRNN-RMSprop, SRNN-SGD, and CWRNN-Adam, CWRNN-RMSprop, CWRNN-SGD we choose the learning rates as $0.002$, $0.002$, and $0.02$ respectively. For LSTM-Adam, LSTM-RMSprop, and LSTM-SGD, we use the learning rates of  $0.004$, $0.004$, and $0.05$. In SRNN-WOGD, we use $\eta = 0.04$ and $w \in \{50,100,200\}$.

We plot the learning curves of the kinematic and elevators experiments in Fig. \ref{fig1:r3} and Fig. \ref{fig1:r4}. Here again, we observe that the SRNNs trained with the state-of-the-art algorithms perform the worst, followed by the CWRNN-based models, LSTM-based models and SRNN-WOGD(w = 200). Moreover, we see that LSTM-based models and SRNN-WOGD(w=200) give substantially better results compared to other two models. We present the mean errors and the run-times of this part in the last two columns of  Table \ref{tab:perf1}. Here, we see that as in the previous experiments, the larger window-size SRNN-WOGD has, the lower mean error it attains. Moreover, we observe that in both experiments, SRNN-WOGD(w = 200) provides very similar mean error with the best LSTM models, yet within two times shorter training time due to efficiency improvements of our algorithm achieved by utilizing SRNNs.


\subsubsection{Experimental Verification of Theoretical Results}
In this section, we further verify our theoretical results by analyzing the smoothness of the error surface and behavior of the normalized regret in the simulations. To observe the smoothness parameters efficiently without calculating the Hessian matrix, we use the finite differences formulated as
\begin{equation*}
\beta^{exp}_t = \max\{\beta^{exp}_{\theta,t}, \beta^{exp}_{\mu,t}\},
\end{equation*} 
where
\begin{spreadlines}{0.8em}
\begin{align}
\beta^{exp}_{\theta,t} &= \frac{\bnorm{\partLThtp- \partLTht}}{\norm{\thk_{t+1} - \thk_t}}  \nonumber \\ 
\beta^{exp}_{\mu,t} &=  \frac{\bnorm{\partLMtp- \partLMt}}{\norm{\muk_{t+1} - \muk_t}}.\nonumber
\end{align}  
\end{spreadlines}
We note that since we use very small learning rates in SRNN-WOGD, the given finite differences closely approximate the smoothness of the error surface in the direction of the gradient update.

In Fig. \ref{fig2:r1}, we plot the smoothness parameters of the error surface that is obtained from the simulations of SRNN-WOGD(w = 200). In the plots, error bars extend between the minimum and maximum smoothness values observed in 30 simulations. Moreover, we provide the theoretical upper bounds (formulated in Theorem \ref{th:lips}) in the title of the plots. In the figure, we observe that as indicated in Remark \ref{rem:lrs}, the error surface is much smoother than what is predicted by its theoretical upper-bounds. We note that this gap is expected, since we derived the upper-bounds by considering the worst-case scenario, i.e., saturation region of SRNNs, which is rarely encountered in practice due to variations in real-world data. 

Additionally, we see in the plots that the selected learning rates satisfy the condition of Theorem \ref{th:conv} given in (\ref{eq:params1}). To verify the regret bound, we plot the normalized regret, i.e., $R_w(t)/t$ for $t \in [T]$, of all SRNN-WOGD runs  in Fig. \ref{fig2:r2}. Here, we see that as consistent with our theoretical derivation, the normalized regret vanishes and its convergence rate increases as the window-size gets larger, which agrees with the results of the previous four experiments.

\subsection{Binary Addition}
In this part, we compare the performance of the algorithms on a synthesized dataset that requires learning long-term dependencies. We show that SRNN-WOGD learns the long-term dependencies comparably well with LSTMs, which explains its success in the previous experiments.

To compare the algorithms, we construct a synthesized experiment in which we can control the length of temporal dependence. To this end, we train the network to learn the summation of $n$ number of binary sequences, where the carry bit is the temporal dependency that the models need to learn. We note that the number of added sequences, i.e., $n$, controls how long the carry bit is propagated on average, hence, the average length of the temporal dependence.  We learn binary addition with a purely online approach, i.e., there is only one single input stream, and learning continues even when the network makes a mistake.

In the experiments, we consider $n \in \{2, 3\}$. For each $n$, we repeat the experiments with $5$ different input streams. We generate the input sequences randomly, where $0$ and $1$'s are drawn with equal probabilities. In the models, we use the sigmoid function as the output layer activation function and the cross-entropy loss as the loss function. We assume that the network decides $1$ when its output is bigger than $0.5$, and $0$ in vice versa. For the performance comparison, we count the number of symbols needed to attain error-free predictions for $1000$ subsequent symbols, i.e., sustainable prediction. We note that as the models are more capable of learning long-term  dependencies, the number of steps required to obtain sustainable prediction is expected to be smaller.

In the first experiment, we consider adding $2$ sequences. The results are presented in Table \ref{tab2:r1}. Due to space constraints, we compare our algorithm only with RMSprop since we observe that RMSprop generally provides the fastest convergence in this task. In the table, we see that SRNN-WOGD(w = 200) consistently achieves the sustainable prediction  with considerably less number of time steps compared to SRNNs, CWRNNs and LSTMs. Moreover, due to the efficiency of our algorithm, SRNN-WOGD(w = 200) requires almost three times smaller run-time compared to LSTMs.

In the second experiment, we consider adding $3$ sequences. The results are presented in Table \ref{tab2:r2}.  Here, we see that CWRNN-RMSprop and SRNN-RMSprop fail in all simulations, i.e., they could not achieve the sustainable prediction within $5\times10^4$ timesteps. Moreover, as in the previous experiment, SRNN-WOGD(w = 200) obtains the sustainable prediction within a relatively small number of time steps and considerably shorter run-time compared to LSTMs in almost all experiments.

\section{Conclusion}\label{sec:concl}
We study online nonlinear regression with continually running SRNNs. For this problem, we introduce a first-order gradient-based optimization algorithm with a linear time complexity in the number of parameters, i.e., the same time complexity as the SGD algorithm. 

We construct our algorithm on a theoretical basis.  We model the SRNN-based online regression problem as an online learning problem, where we assume each time step as a separate loss function assigned by an adversary. We characterize the Lipschitz properties of these loss functions with respect to the network weights and derived sufficient conditions for our model to have bounded derivatives. Then, by using these results, we introduce an online gradient descent algorithm that is guaranteed to converge to the locally optimum parameters in a strong deterministic sense, i.e., without any stochastic assumptions. 

Through an extensive set of experiments, we verify our theoretical results and demonstrate significant performance improvements of our algorithm with respect to LSTMs trained with the state-of-the-art methods. In particular, we show that when SRNNs are trained with our algorithm, they provide very similar performance with the LSTMs trained with  the state-of-the-art training methods: Adam, RMSprop, and SGD. Moreover, we observe that our algorithm achieves the equivalent performance in two to three times shorter run-time due to the smaller number of parameters in SRNNs compared to LSTMs. 

As future work, we consider utilizing adaptive learning rate schemes and momentum methods in our algorithm to improve its performance further.  We also plan to test our algorithm on other classes of problems, such as sequence to sequence learning or generative models, to extend its use in practical applications.

\appendices

\section{} 
In this part, we explain why our algorithm can be used with the cross-entropy loss without any change. Since the cross-entropy loss is mainly used to learn binary target values, we naturally assume the following RNN architecture:
\begin{align}
&\htn = \tanh(\W \htm + \U \xt) \nonumber  \\
&\pht =  \sigma(\C^T \htn) \nonumber \\
&E_t = RE(\pt,\pht) \nonumber.
\end{align}
Here,  $\sigma$ is the sigmoid function, i.e., $\sigma(x) = 1/(1+ e^{-x})$,  $\htn \in [-1,1]^{\nh}$ is the hidden vector, $\xt \in [-1,1]^{\nx}$ is the input vector, and $\pt, \pht \in [0,1]$ are the target and estimated values. Moreover, $RE$ denotes the cross-entropy loss, i.e., $RE(\pt, \pht) = - \pt \log \pht - (1 - \pt) \log ( 1 - \pht)$, and $E_t$ denotes the instantaneous loss at time step $t$.

As in the squared loss, the cross-entropy is convex with respect to output layer weights $\C$. Therefore, we can use the projected online gradient descent -- as in (\ref{eq:updc})-- to ensure the convergence of the output layer learning rule. Moreover, the formulation of the derivative of the cross-entropy function with respect to $\C$ is the same with that of the squared loss, i.e.,
\begin{equation*}
\frac{0.5(\dt-\dht)^2}{\del \C}  = (\dht-\dt) \C \textrm{ and } \frac{\del RE(\pt , \pht)}{\del \C} = (\pht-\pt) \C,
\end{equation*}
where $\dt$ and $\dht$ are the outputs of the regression model in (\ref{model1})-(\ref{model2}). Therefore, the Lipschitz properties derived  in Theorem \ref{th:lips} applies to the cross-entropy loss as well.  Since Theorem \ref{th:conv} uses only the Lipschitz properties, it can be extended for the cross-entropy loss with the same learning rules in (\ref{eq:updth})-(\ref{eq:updmu}). As a result, Algorithm \ref{alg:alg1} can be used for the cross-entropy loss without any change.

\section{Preliminaries for the Proofs}
\label{sec:pre}
In the proofs, we use $\norminf{\cdot}$ for the $\ell_\infty$ norm. We denote the derivative of $\tanh$ as $\tanh'$, where $\tanh'(x)= 1- \tanh(x)^2$. We denote the elementary row scaling operation with $\odot$, i.e., $\textbf{x} \odot \textbf{W}= \textrm{diag}(\textbf{x}) \W$. Here, $\textrm{diag}(\textbf{x}) \in \mathbbm{R}^{n \times n}$ is the elementary scaling matrix whose diagonal elements are the components of $\textbf{x} \in \mathbbm{R}^{n}$.  

For the following analysis, we reformulate our network model as
\begin{align}
&\dht =  \C^T \htn.\label{model2} \\
&\htp = \tanh(\W \htn + \U \xtp ). \label{model1} 
\end{align}
Moreover, we write the hidden state update in (\ref{model1}) with the vectorized weight matrices as
\begin{equation}
\htp = \tanh(\Htn \thk + \Xtp \muk) \label{model3}
\end{equation}
where $\Htn= \I \otimes \htn^T$, $\Xt= \I \otimes \xt^T$, and $\otimes$ is the Kronecker product.

\section{Auxiliary Propositions}
\label{sec:aux}
\begin{prop}
\label{prop:aux1}
For any $\textbf{x}, \textbf{y} \in \mathbbm{R}^n$, $\textbf{W} \in \mathbbm{R}^{n \times m}$, where $n,m \in \mathbbm{N}$, the following statements hold:
\begin{align}
&1 ) \: \norm{\textbf{x} \odot \W} \leq \norminf{\textbf{x}} \norm{\W} \hspace{42mm} \label{lem3:1} \\
&2 ) \: \norminf{\tanh'(\textbf{x})- \tanh'(\textbf{y}) } \leq 2 \norm{\textbf{x}-\textbf{y}} \label{lem3:2} \\
&3 ) \: \norm{\I \otimes \textbf{x}^T} = \norm{\textbf{x}}. \label{lem3:3}
\end{align}
\end{prop}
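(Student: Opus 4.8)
The plan is to establish each of the three claims in Proposition \ref{prop:aux1} separately, since they are elementary facts about the norms, the $\tanh'$ function, and the Kronecker product that will be invoked repeatedly in the proofs of Lemma \ref{lem:auxdif} and Theorem \ref{th:lips}. For claim (\ref{lem3:1}), recall that $\textbf{x} \odot \W = \diag(\textbf{x}) \W$, so I would use submultiplicativity of the spectral norm to write $\norm{\diag(\textbf{x}) \W} \leq \norm{\diag(\textbf{x})} \norm{\W}$, and then observe that the spectral norm of a diagonal matrix equals the largest absolute value among its diagonal entries, which is exactly $\norminf{\textbf{x}}$. This gives the bound immediately.

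For claim (\ref{lem3:2}), I would work componentwise, since both $\tanh'$ and the $\ell_\infty$ norm act entrywise. Fixing a coordinate, the task reduces to bounding $|\tanh'(a) - \tanh'(b)|$ for scalars $a,b$. Using $\tanh'(x) = 1 - \tanh(x)^2$, the mean value theorem gives $|\tanh'(a) - \tanh'(b)| \leq \big(\sup_x |\tfrac{d}{dx}\tanh'(x)|\big) |a - b|$, so the key is to show the derivative of $\tanh'$ is bounded by $2$ in absolute value. Since $\tfrac{d}{dx}\tanh'(x) = -2\tanh(x)\tanh'(x) = -2\tanh(x)(1-\tanh(x)^2)$ and $|\tanh(x)| \leq 1$, $0 \leq \tanh'(x) \leq 1$, one gets the uniform bound of $2$. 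Taking the maximum over coordinates and bounding the resulting $\ell_\infty$ distance between the arguments by their $\ell_2$ distance yields (\ref{lem3:2}).

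For claim (\ref{lem3:3}), I would compute $\norm{\I \otimes \textbf{x}^T}$ directly. The standard property of the Kronecker product is that the singular values of $\textbf{A} \otimes \textbf{B}$ are all products of a singular value of $\textbf{A}$ with a singular value of $\textbf{B}$; here $\textbf{A} = \I$ has all singular values equal to $1$ and $\textbf{B} = \textbf{x}^T$ has a single nonzero singular value equal to $\norm{\textbf{x}}$. Hence the largest singular value of $\I \otimes \textbf{x}^T$ is $\norm{\textbf{x}}$, which is the claim. Alternatively, I can verify this by noting $(\I \otimes \textbf{x}^T)(\I \otimes \textbf{x}) = \I \otimes (\textbf{x}^T \textbf{x}) = \norm{\textbf{x}}^2 \I$ and taking square roots of the eigenvalues.

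None of these steps presents a genuine obstacle; the proposition is purely a collection of standard norm and Kronecker-product inequalities assembled for later convenience. If anything, the only point requiring slight care is the constant $2$ in claim (\ref{lem3:2}): one must verify that the supremum of $|2\tanh(x)(1-\tanh(x)^2)|$ over all real $x$ does not exceed $2$, which follows from the crude bounds $|\tanh(x)| \leq 1$ and $0 \leq 1 - \tanh(x)^2 \leq 1$ rather than from optimizing the expression exactly (the true maximum is smaller, but $2$ suffices and keeps the argument clean).
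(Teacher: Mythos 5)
Your proposal is correct and follows essentially the same route as the paper: part (\ref{lem3:1}) via $\textbf{x}\odot\W=\diag(\textbf{x})\W$, submultiplicativity, and $\norm{\diag(\textbf{x})}=\norminf{\textbf{x}}$; part (\ref{lem3:2}) via the $2$-smoothness of $\tanh$ together with comparing the $\ell_\infty$ and $\ell_2$ norms. The only differences are cosmetic and in your favor: you verify the smoothness constant $2$ explicitly by the mean value theorem (the paper merely asserts it), and you prove part (\ref{lem3:3}) directly from the singular-value structure of Kronecker products, or equivalently from $(\I\otimes\textbf{x}^T)(\I\otimes\textbf{x})=\norm{\textbf{x}}^2\I$, where the paper simply cites an external reference.
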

\begin{proof}[Proof of Proposition \ref{prop:aux1}]
\begin{enumerate}
\item Since $\textbf{x} \odot \textbf{W}= \textrm{diag}(\textbf{x}) \W$, we have $ \norm{\textbf{x} \odot \W} \leq \norm{\textrm{diag}(\textbf{x})} \norm{\W}$, where we use the Cauchy-Schwarz inequality for bounding. Since by definition $\norm{\textrm{diag}(\textbf{x})}= \norminf{\textbf{x}}$, $\norm{\textbf{x} \odot \W}  \leq \norm{\textrm{diag}(\textbf{x})} \norm{\W} = \norminf{\textbf{x}} \norm{\W}$.
\item Recall that $\tanh'(x)= 1- \tanh(x)^2$. Since $\tanh(x) \in [-1,1]$, $\tanh$ is $1$-Lipschitz $2$-smooth. Then, by using  $\norminf{\textbf{x}} \leq \norm{\textbf{x}}$ for any $\textbf{x} \in \mathbbm{R}^n,$ we have $\norminf{\tanh'(\textbf{x})- \tanh'(\textbf{y}) } \leq \norm{\tanh'(\textbf{x})- \tanh'(\textbf{y}) }$. Since $\tanh$ is $2$-smooth, we have  $\norm{\tanh'(\textbf{x})- \tanh'(\textbf{y}) }  \leq 2 \norm{\textbf{x}-\textbf{y}}$.
\item See \cite[Theorem 8]{Kron}.
\end{enumerate}
\end{proof}

\begin{prop}
\label{prop:aux2}
Let $\W$ and $\U$ be the hidden-layer weight matrices in (\ref{model1}) satisfying $\norm{\W} \leq \lambda$, and $\norm{\U} \leq \lambda$ for some $\lambda \in \mathbbm{R}$. By using the formulation given in (\ref{model3}), the Lipschitz and smoothness properties of the single SRNN iteration can be written as:
\begin{spreadlines}{0.8em}
\begin{align}
&1 ) \: \bnorm{ \frac{\del \tanh(\Htn \thk + \Xtp \muk)}{\del \htn } } \! \leq  \! \lambda, \hspace{33mm} \label{eq:h} \\ 
&2 ) \: \bnorm{ \frac{\del^2 \! \tanh(\Htn \thk + \Xtp \muk)}{\del \htn^2} } \! \leq \! 2 \lambda^2,  \label{eq:hh}  \\
&3 ) \: \bnorm{ \frac{\del^2 \! \tanh(\Htn \thk + \Xtp \muk)}{\del \htn \del \thk} } \! \leq \! 2 \lambda \sqrt{\nh}, \label{eq:ht}  \\
&4 ) \: \bnorm{ \frac{\del^2 \! \tanh(\Htn \thk + \Xtp \muk)}{\del \htn \del \muk} } \! \leq \! 2 \lambda \sqrt{\nx}, \label{eq:hm} \\
&5 ) \: \bnorm{ \frac{\del^2 \! \tanh(\Htn \thk + \Xtp \muk)}{\del \thk \del \muk} } \! \leq \! 2 \sqrt{\nx} \sqrt{\nh}. \label{eq:tm} \\
&6 ) \: \bnorm{ \frac{\del \tanh(\Htn \thk \! + \! \Xtp \muk)}{\del \thk}  } \! \leq \! \sqrt{\nh},  \label{eq:t}  \\
&7 ) \: \bnorm{ \frac{\del \tanh(\Htn \thk \! + \! \Xtp \muk)}{\del \muk}  } \! \leq \! \sqrt{\nx},  \label{eq:m}  \\
&8 ) \: \bnorm{ \frac{\del^2  \!\tanh(\Htn \thk \! + \! \Xtp \muk)}{\del \thk^2} } \! \leq \! 2 \nh, \label{eq:tt}  \\
&9 ) \: \bnorm{ \frac{\del^2 \! \tanh(\Htn \thk \! + \! \Xtp \muk)}{\del \muk^2} } \! \leq \! 2 \nx. \label{eq:mm} 
\end{align}
\end{spreadlines}
\end{prop}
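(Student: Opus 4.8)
The plan is to reduce every bound to a single application of Proposition~\ref{prop:aux1} after writing each derivative in closed form via the chain rule. Throughout I abbreviate the pre-activation as $\textbf{z} \coloneqq \Htn\thk + \Xtp\muk = \W\htn + \U\xtp$ and use that $\tanh'$ is bounded in magnitude by $1$, so $\norminf{\tanh'(\textbf{z})}\le 1$. For the three first-order bounds (\ref{eq:t}), (\ref{eq:m}), (\ref{eq:h}), differentiating $\tanh(\textbf{z})$ produces Jacobians of the form $\tanh'(\textbf{z})\odot\Htn$, $\tanh'(\textbf{z})\odot\Xtp$, and $\tanh'(\textbf{z})\odot\W$, respectively. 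Applying part~1 of Proposition~\ref{prop:aux1} peels off $\tanh'$ at the cost of the factor $\norminf{\tanh'(\textbf{z})}\le 1$, and then part~3 together with the hypotheses $\norm{\W}\le\lambda$, $\norm{\htn}\le\sqrt{\nh}$, $\norm{\xtp}\le\sqrt{\nx}$ supply the remaining factors $\norm{\Htn}=\norm{\htn}\le\sqrt{\nh}$, $\norm{\Xtp}=\norm{\xtp}\le\sqrt{\nx}$, and $\norm{\W}\le\lambda$. This yields $\sqrt{\nh}$, $\sqrt{\nx}$, and $\lambda$ directly.

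For the second-order bounds I would use the standard device of reading a bound on the second derivative as a Lipschitz constant of the relevant first derivative (Jacobian), which lets me invoke part~2 of Proposition~\ref{prop:aux1}, i.e.\ the $2$-Lipschitzness of $\tanh'$. Concretely, to bound a mixed (or pure) second derivative I fix all variables except the differentiation variable, form the difference of the corresponding Jacobian matrices at two points, and split it with the triangle inequality. In the ``easy'' cases (\ref{eq:hh}), (\ref{eq:tt}), (\ref{eq:mm}), (\ref{eq:hm}), (\ref{eq:tm}) the coefficient matrix multiplying $\tanh'(\textbf{z})$ does \emph{not} depend on the variable being differentiated: e.g.\ in (\ref{eq:tm}) the Jacobian is $\tanh'(\textbf{z})\odot\Htn$ and $\Htn$ is independent of $\muk$, so only $\tanh'(\textbf{z})$ changes. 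Then part~2 gives $\norminf{\tanh'(\textbf{z})-\tanh'(\textbf{z}')}\le 2\norm{\textbf{z}-\textbf{z}'}$, the linear dependence of $\textbf{z}-\textbf{z}'$ on the perturbation contributes one more norm factor ($\norm{\W}\le\lambda$, $\norm{\Htn}\le\sqrt{\nh}$, or $\norm{\Xtp}\le\sqrt{\nx}$ as appropriate), and part~1 with part~3 contributes the norm of the (fixed) coefficient matrix. Multiplying these two factors reproduces exactly the stated constants $2\lambda^2$, $2\nh$, $2\nx$, $2\lambda\sqrt{\nx}$, and $2\sqrt{\nx}\sqrt{\nh}$.

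The one genuinely delicate estimate is the $\htn$--$\thk$ cross derivative (\ref{eq:ht}), and I expect it to be the main obstacle. Here $\htn$ and $\thk$ are bilinearly coupled through $\W\htn = \Htn\thk$, so whichever order I differentiate, the coefficient matrix itself depends on the differentiation variable: taking the Jacobian $\tanh'(\textbf{z})\odot\Htn$ and perturbing $\htn$ makes \emph{both} $\textbf{z}$ and $\Htn=\I\otimes\htn^T$ vary. The triangle-inequality split therefore produces two contributions, $\big(\tanh'(\textbf{z})-\tanh'(\textbf{z}')\big)\odot\Htn$ and $\tanh'(\textbf{z}')\odot(\Htn-\Htn')$; the first is handled exactly as in the easy cases and gives the advertised $2\lambda\sqrt{\nh}$, while the second is the extra product-rule term coming from the coefficient-matrix variation. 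Controlling this second term (which a crude split bounds only by $\norm{\htn-\htn'}$ via part~3) while keeping the target constant at $2\lambda\sqrt{\nh}$ is the crux, and I would look for a tighter component-wise estimate of the Hessian tensor rather than the blunt triangle-inequality bound to close this gap.
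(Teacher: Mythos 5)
Your plan is, for eight of the nine bounds, exactly the paper's proof: write each Jacobian as $\tanh'(\cdot)\odot(\textrm{coefficient matrix})$, peel off $\tanh'$ with part 1 of Proposition~\ref{prop:aux1}, use part 2 (the $2$-smoothness of $\tanh$) for the second-order estimates, and part 3 to convert $\norm{\Htn}$ into $\norm{\htn}\le\sqrt{\nh}$; your classification of (\ref{eq:h}), (\ref{eq:t}), (\ref{eq:m}), (\ref{eq:hh}), (\ref{eq:hm}), (\ref{eq:tm}), (\ref{eq:tt}), (\ref{eq:mm}) as cases where only $\tanh'$ varies agrees with the paper, and the constants come out identically.

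On the crux you flagged in (\ref{eq:ht}): your suspicion is correct, but you should not spend time hunting for a tighter component-wise Hessian estimate, because none exists and the paper does not have one either. The paper performs precisely the two-term split you describe (adding $\pm\tanh'(\Htn'\thk+\Xtp\muk)\odot\Htn$ inside the norm), bounds the first term by $2\lambda\sqrt{\nh}\,\norm{\htn-\htn'}$ and the second by $\norminf{\tanh'(\cdot)}\norm{\Htn-\Htn'}\le\norm{\htn-\htn'}$, arrives at $(2\lambda\sqrt{\nh}+1)\norm{\htn-\htn'}$, and then states that it omits the $+1$ term ``for mathematical convenience in the following derivations.'' The omitted term is not an artifact of a crude bound: the cross-derivative genuinely contains a product-rule contribution $\tanh'(z_i)\delta_{ij}\delta_{kl}$ in addition to the $\tanh''$ part, so near $\htn=\bm{0}$ (where the Jacobian $\tanh'(\cdot)\odot\Htn$ vanishes) with pre-activation close to zero, the Jacobian difference quotient approaches $1$; hence the stated constant $2\lambda\sqrt{\nh}$ is actually violated whenever $\lambda<1/(2\sqrt{\nh})$, and the honest constant is $2\lambda\sqrt{\nh}+1$. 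Carrying the $+1$ through only inflates the smoothness constants in Theorem~\ref{th:lips} additively and correspondingly shrinks the admissible step size in Theorem~\ref{th:conv}, with no qualitative change. In short, the one step you could not close is the one step the paper resolves by fiat rather than by proof; your proposal is otherwise the paper's argument.
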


\begin{proof}[Proof of Proposition \ref{prop:aux2}] In the following, we prove each statement separately:
\begin{enumerate}[wide=0pt]
\item We note that (\ref{model1}) and (\ref{model3}) are equivalent. By using (\ref{lem3:1}) and $\tanh'(x) \leq 1$ on (\ref{model1}), we write
\begin{align*} 
\bnorm{ \frac{\partial \tanh(\W \htn + \U \xtp) }{\partial \htn } } & =  \norm{ \tanh'(\W \htn + \U \xtp) \odot \W } \\
& \leq  \norminf{\tanh'(\W \htn + \U \xtp)} \norm{\W }  \\
& \leq \lambda.
\end{align*} 
\item  By using (\ref{lem3:1}) and (\ref{lem3:2}), we write
\begin{spreadlines}{0.8em}
\begin{align*}
&\bnorm{ \frac{\partial \tanh(\W \htn + \U \xtp)}{\partial \htn} - \frac{\partial \tanh(\W' \htn + \U \xtp)}{\partial \htn}}\\ 
&= \norm{ \tanh'(\W \htn + \U \xtp) \odot \W - \tanh'(\W \htn' + \U \xtp) \odot \W} \\
&\leq  \norminf{\tanh'(\W \htn + \U \xtp) - \tanh'(\W \htn' + \U \xtp) } \norm{\W}  \\
&\leq 2 \norm{\W}  \norm{\htn - \htn'} \norm{\W}  \leq 2 \lambda^2 \norm{\htn - \htn'}.
\end{align*} 
\end{spreadlines}
\item We note that since $\tanh$ is twice differentiable, the order of partial derivatives is not important. Then, by using  (\ref{lem3:1}), (\ref{lem3:2}), (\ref{lem3:3}), and $\norm{\htn} \leq \sqrt{\nh}$, we write
\begin{spreadlines}{0.8em}
\begin{align}
&\bnorm{ \frac{\partial \tanh(\Htn \thk + \Xtp \muk)}{\partial \thk} - \frac{\partial \tanh(\Htn' \thk + \Xtp \muk)}{\partial \thk}  } \nonumber \\
& {\footnotesize = \norm{ \tanh'(\Htn \thk + \Xtp \muk) \odot \Htn \! - \! \tanh'(\Htn' \thk + \Xtp \muk) \odot \Htn'} } \label{lem1:m1} \\
&\leq  \norminf{ \tanh'(\Htn \thk + \Xtp \muk)  - \tanh'(\Htn' \thk + \Xtp \muk) } \norm{\Htn}   \nonumber \\
& \quad + \norminf{ \tanh'(\Htn' \thk + \Xtp \muk) } \norm{\Htn- \Htn'} \label{lem1:0} \\
& \leq  \norminf{\tanh'(\W \htn + \U \xtp)  -  \tanh'(\W \htn' + \U \xtp) } \norm{\Htn}  \nonumber \\
&\quad  + \norm{\htn-\htn'} \\
& \leq \big(2 \norm{\W} \sqrt{\nh} \!  + \! 1 \big) \norm{\htn \! - \! \htn'} \leq (2 \lambda \sqrt{\nh} \! + \! 1 ) \norm{\htn - \htn'},  \nonumber
\end{align}
\end{spreadlines}
where we add $\pm \tanh'(\Htn' \thk + \Xtp \muk) \odot \Htn$ inside of the norm in (\ref{lem1:m1}), and use the triangle inequality for (\ref{lem1:0}). Here, we omit $+1$ term for mathematical convenience in the following derivations.
\item This can be obtained by repeating the steps in the proof of (\ref{eq:ht}) for $\muk$ and $\htn$.
\item By using  (\ref{lem3:1}), (\ref{lem3:2}), (\ref{lem3:3}), $\norm{\htn} \leq \sqrt{\nh}$ and $\norm{\xt} \leq \sqrt{\nx}$, we write
\begin{spreadlines}{0.8em}
\begin{align*}
&\bnorm{ \frac{\partial \tanh(\Htn \thk + \Xtp \muk)}{\partial \thk} - \frac{\partial \tanh(\Htn \thk + \Xtp \muk')}{\partial \thk}  } \\
& {\footnotesize = \norm{\tanh'(\Htn \thk + \Xtp \muk) \odot \Htn - \tanh'(\Htn \thk + \Xtp \muk') \odot \Htn} }\\
& \leq \norminf{\tanh'(\Htn \thk + \Xtp \muk) - \tanh'(\Htn \thk + \Xtp \muk') }  \norm{\Htn} \\
& \leq 2 \norm{\Xtp} \norm{\muk-\muk'} \norm{\Htn} \\
& \leq 2 \sqrt{\nh} \sqrt{\nx} \norm{\muk-\muk'}.
\end{align*} 
\end{spreadlines}
\item By using  (\ref{lem3:1}), $\tanh'(x) \leq 1$, and $\norm{\htn} \leq \sqrt{\nh}$,
\begin{spreadlines}{0.8em}
\begin{align*}
\bnorm{ \frac{\partial \tanh(\Htn \thk + \Xtp \muk)}{\partial \thk}  } & =  \norm{\tanh'(\Htn \thk + \Xtp \muk)\odot \Htn}  \\
& \leq  \norminf{\tanh'(\Htn \thk + \Xtp \muk)} \norm{ \Htn } \\
& \leq \sqrt{\nh}.
\end{align*} 
\end{spreadlines}
\item This can be obtained by repeating the steps in the proof of (\ref{eq:t}) for $\muk$.
\item By using  (\ref{lem3:1}), (\ref{lem3:2}), (\ref{lem3:3}), and $\norm{\htn} \leq \sqrt{\nh}$, we write
\begin{spreadlines}{0.8em}
\begin{align*}
& \bnorm{ \frac{\partial \tanh(\Htn \thk + \Xtp \muk)}{\partial \thk} - \frac{\partial \tanh(\Htn \thk' + \Xtp \muk)}{\partial \thk}  } \\
&= {\footnotesize \norm{ \tanh'(\Htn \thk + \Xtp \muk) \odot \Htn - \tanh'(\Htn \thk' + \Xtp \muk) \odot \Htn} } \\
&\leq \norminf{\tanh'(\Htn \thk + \Xtp \muk)- \tanh'(\Htn \thk' + \Xtp \muk)} \norm{\Htn}  \\
& \leq 2 \norm{\Htn}  \norm{\thk-\thk'} \norm{\Htn} \\
&\leq 2 \nh \norm{\thk-\thk'}.
\end{align*} 
\end{spreadlines}
\item This can be obtained by repeating the steps in the proof of (\ref{eq:tt}) for $\muk$.
\end{enumerate}

\end{proof}

\section{Proof of Lemma 1}
\label{sec:l1}
\begin{proof}[Proof of Lemma 1]
Before the proof, let $\htn(\thk',\muk)$ be the state vector obtained at time $t$ by running the model in (\ref{model1}) with the matrices $\W'$, $\U$ , input sequence $\{\textbf{x}_1,\textbf{x}_2,\cdots,\textbf{x}_{t}\}$, and the initial condition $\htb(\thk',\muk)=\htb(\thk, \muk)= \htb(\thk', \muk')$. Then,
\begin{align}
&\norm{\htn(\thk, \muk) - \htn(\thk', \muk')} \nonumber   \\
&\hspace{8mm} \quad \leq \norm{\htn(\thk, \muk) - \htn(\thk', \muk)}  + \norm{\htn(\thk', \muk) - \htn(\thk', \muk')}, \label{lem2:2} 
\end{align}
where we add  $\pm \htn(\thk',\muk)$ inside of the norm and use the triangle inequality. 

We will bound the terms in (\ref{lem2:2}) separately. We begin with the first term. Since $\htn(\thk, \muk)$ and $\htn(\thk', \muk)$ include the same $\muk$, in the following (between (\ref{lem2:3})-(\ref{lem2:7})), we abbreviate them as $\htn(\thk)$ and $\htn(\thk')$: 
\begin{spreadlines}{0.8em}
\begin{align}
& \norm{\htn(\thk) - \htn(\thk')} =  \label{lem2:3}\\
&= \norm{\tanh(\W \htm(\thk) + \U \xt) - \tanh(\W' \htm(\thk') + \U \xt)} \label{lem2:4}  \\
&\leq \norm{\tanh(\W \htm(\thk) + \U \xt) - \tanh(\W \htm(\thk') + \U \xt)} \nonumber \\
& \quad  +\norm{\tanh(\W \htm(\thk') + \U \xt) - \tanh(\W' \htm(\thk') + \U \xt)}  \label{lem2:5} \\
&\leq \lambda \norm{ \htm(\thk)- \htm(\thk')} + \sqrt{\nh} \norm{\thk- \thk'} \label{lem2:6} \\
&\leq  \sum_{i=0}^{t-1} \Big( \lambda^i  \sqrt{\nh} \norm{\thk- \thk'} \Big) \label{lem2:7} 
\end{align}
\end{spreadlines}
Here, to obtain (\ref{lem2:5}), we add $\pm \tanh(\W \htm(\thk') + \U \xt)$ inside of the norm in (\ref{lem2:4}), and use the triangle inequality. Then, we use (\ref{eq:h}) and (\ref{eq:t}) to get (\ref{lem2:6}). Until we reach (\ref{lem2:7}), we repeatedly apply the same bounding technique to bound the norm of the differences between the state vectors.

Now, we bound the second term in (\ref{lem2:2}). Since $\htn(\thk', \muk)$ and $\htn(\thk', \muk')$ include the same $\thk'$, in the following (between (\ref{lem2:9})-(\ref{lem2:13})), we abbreviate them as $\htn(\muk)$ and $\htn(\muk')$: 
\begin{spreadlines}{0.8em}
\begin{align}
& \norm{\htn(\muk) - \htn(\muk')} = \label{lem2:9} \\
&= \norm{\tanh(\W' \htm(\muk) \! + \! \U \xt) \!- \! \tanh(\W' \htm(\muk') \! + \! \U' \xt)} \label{lem2:10}  \\
& \leq \norm{\tanh(\W' \htm(\muk) \! + \! \U \xt) \!-\! \tanh(\W' \htm(\muk') \! + \! \U \xt)} \nonumber \\
& \quad  +\norm{\tanh(\W' \htm(\muk') \! + \! \U \xt) \! - \! \tanh(\W' \htm(\muk') \!+ \!\U' \xt)}  \label{lem2:11}\\
& \leq  \lambda \norm{ \htm(\muk)- \htm(\muk')} + \sqrt{\nx} \norm{\muk- \muk'} \label{lem2:12} \\
& \leq \sum_{i=0}^{t-1} \Big( \lambda^i \sqrt{\nx} \norm{\muk- \muk'}  \Big), \label{lem2:13} 
\end{align} 
\end{spreadlines}
where for (\ref{lem2:11}), we add $\pm \tanh(\W' \htm(\muk') + \U \xt)$ and use the triangle inequality. We, then, use (\ref{eq:h}) and (\ref{eq:m}) to get (\ref{lem2:12}).  Until we reach (\ref{lem2:13}), we repeatedly apply the same technique to bound the norm of the differences between state vectors. In the end, we use (\ref{lem2:7}) and (\ref{lem2:13}) to bound (\ref{lem2:2}), which yields the statement in the lemma.
\end{proof}

\section{Proof of Theorem 1}
\label{sec:th1}
\begin{proof}[Proof of Theorem 1]
Recall that 
$$\slossgen = \frac{1}{w} \sum_{i=0}^{w-1} \ell_{t-i}(\thk,\muk).$$ 
Then, if we bound the derivative of $\ell_{t}(\thk,\muk)$ for an arbitrary $t \in [T]$, the resulting bound will be  valid for $\slossgen$ as well. Therefore, here, we analyze the Lipschitz properties of $\ell_{t}(\thk,\muk)$ for an arbitrary $t \in [T]$ and extend the result to $\slossgen$.  In the following, we prove each statement of the theorem separately.
\begin{enumerate}[wide=0pt]
\item Let us use $\htn$ and $\htn'$ for the state vectors obtained by running the model in (\ref{model1})  from the initial step up to current time step $t$ with the same initial condition, same input layer matrix $\muk$, common input sequence $\{\textbf{x}_1,\cdots, \textbf{x}_{t}\}$ but different $\thk$ and $\thk'$, respectively.  Let us also say $\losst= 0.5 (\dt-\dht')^2$, where $\dht'$ is the prediction of the second model producing $\htn'$.  Then,
\begin{spreadlines}{0.8em} 
\begin{align}
&\bnorm{\frac{\del \loss}{\del \thk}- \frac{\del \losst}{\del \thk}} \nonumber \\
& = \!\bnorm{(\dt \! - \! \dht') \C^T \Big(\! \sum_{\tau=1}^t \! \phtntp \phtthp \!\Big)\!- \!(\dt \!-\!\dht) \C^T \Big(\! \sum_{\tau=1}^t \! \phtnt \phtth \! \Big)  } \label{th12:2} \\
& \leq  2 \sqrt{\nh} \sum_{\tau=1}^t \bnorm{ \phtnt \phtth- \phtntp \phtthp } \label{th12:3} \\
& \footnotesize{ \leq \! 2 \sqrt{\nh}  \sum_{\tau=1}^t \! \Big( \bnorm{ \phtnt \phtth \! - \! \phtnt \phtthp } \! + \! \bnorm{ \phtthp \phtnt \! - \! \phtthp \phtntp }  \Big) } \label{th12:35} \\
& \leq \! 2 \sqrt{\nh}  \sum_{\tau=1}^t \! \Big( \bnorm{ \phtnt } \bnorm{ \phtth \! - \! \phtthp } \! + \! \bnorm{ \phtthp } \bnorm{ \phtnt \! - \! \phtntp }  \Big) \label{th12:4} \\
& \leq 2 \sqrt{\nh} \sum_{\tau=1}^t \lambda^{t-\tau} \bnorm{ \phtth \! - \! \phtthp } \! + \! 2 \nh \sum_{\tau=1}^t  \bnorm{ \phtnt  \! - \! \phtntp }. \label{th12:5}
\end{align} 
\end{spreadlines}
Here, we use the bounds of $\dt$, $\dht$ and $\C$ for (\ref{th12:2}).\footnote{Note that since $\dt , \dht \in [-\sqrt{\nh},\sqrt{\nh}]$ and $\norm{\Ct} \leq 1$, the $\ell_2$ norm of $(\dt-\dht) \C$ is bounded by $2 \sqrt{\nh}$, i.e., $\norm{(\dt-\dht) \C} \leq 2 \sqrt{\nh}$.}  To get (\ref{th12:35}), we add $\pm \phtnt \phtthp$ inside the norm of (\ref{th12:3}), and use the triangle inequality. To get (\ref{th12:5}), we use (\ref{eq:h}) and (\ref{eq:t}). 

In the following, we will bound the terms in (\ref{th12:5}) separately. We begin with the first term. Note that $\htt= \tanh( \W \httm + \U \textbf{x}_{\tau})$, and $\htt'= \tanh( \W' \httm' + \U \textbf{x}_{\tau})$. Then,
\begin{spreadlines}{0.8em}
\begin{align}
2 &  \sqrt{\nh}  \sum_{\tau=1}^t   \lambda^{t-\tau} \bnorm{ \phtth \!  - \!  \phtthp } \label{th12:525}  \\
& \leq 2 \sqrt{\nh}    \sum_{\tau=1}^t   \lambda^{t-\tau} \bnorm{ \phtth \!  -  \! \frac{\del \tanh( \W \httm' + \U \textbf{x}_{\tau-1}) }{\del \thk}  } \nonumber \\
&\quad  + 2 \sqrt{\nh}    \sum_{\tau=1}^t  \lambda^{t-\tau} \bnorm{ \frac{\del \tanh( \W \httm' + \U \textbf{x}_{\tau-1}) }{\del \thk} \! - \!  \phtthp } \label{th12:55} \\
& \leq  2 \sqrt{\nh} \sum_{\tau=1}^t   \lambda^{t-\tau} \big( 2 \lambda \sqrt{\nh} \norm{\httm \! - \! \httm'} \! + \! 2 \nh \norm{\thk \! - \! \thk'} \big) \label{th12:6} \\
& \leq 2 \sqrt{\nh} \Big( \! \sum_{\tau=1}^t \lambda^{t-\tau} \! \Big) \Big( \frac{2 \lambda \nh}{1-\lambda} + 2 \nh \! \Big) \norm{\thk \!  - \!  \thk'} \label{th12:65}  \\
& \leq \frac{ 4 \nh \sqrt{\nh}}{1-\lambda} \Big( \frac{ \lambda }{1-\lambda} + 1 \Big) \norm{\thk \!  - \! \thk'}, \label{th12:665}
\end{align} 
\end{spreadlines}
Here, to get (\ref{th12:55}), we add $\pm \frac{\del \tanh( \W \httm' + \U \textbf{x}_{\tau-1}) }{\del \thk}$ inside of the norm in (\ref{th12:525}) and use the triangle inequality. We use (\ref{eq:ht}) and (\ref{eq:tt}) for (\ref{th12:6}). We use Lemma 1 and $\lambda \in [0,1)$ for (\ref{th12:65}).

Now, we bound the second term in (\ref{th12:5}). To bound the term, we first focus on the term inside of the sum, i.e., $\bnorm{ \phtnt  -  \phtntp }$:
\begin{spreadlines}{0.8em}
\begin{align}
&  \bnorm{ \phtnt  \!- \!  \phtntp }  \leq \bnorm{\frac{\del \htm}{\del \htt}} \bnorm{\frac{\del \htn}{\del \htm}-\frac{\del \htn'}{\del \htm'}} \nonumber\\
& \qquad \qquad \qquad \qquad   + \bnorm{\frac{\del \htn'}{\del \htm'}}  \bnorm{ \frac{\del \htm}{\del \htt}-\frac{\del \htm'}{\del \htt}} \label{th12:675}\\
& \leq \! \lambda^{t-\tau-1} \big( 2 \lambda^2 \norm{\htm \! -  \! \htm'}  \! + \! 2 \lambda \sqrt{\nh} \norm{\thk \! - \! \thk'} \big)  \nonumber \\
& \qquad \qquad \qquad \qquad  +  \lambda \bnorm{\frac{\del \htm}{\del \htt}  \! - \! \frac{\del \htm'}{\del \htt}} \label{th12:7} \\
&  \leq  \lambda^{t-\tau-1} \Big( \frac{ 2 \lambda^2 \sqrt{\nh}}{1-\lambda}  \! + \! 2 \lambda \sqrt{\nh}  \Big) \norm{\thk \! - \! \thk'} \! + \!  \lambda \bnorm{\frac{\del \htm}{\del \htt}  \! - \! \frac{\del \htm'}{\del \htt}} \label{th12:75} \\
& \leq (t \! - \! \tau) \lambda^{t-\tau-1} \Big( \frac{ 2 \lambda^2 \sqrt{\nh}}{1-\lambda} \! + \! 2 \lambda \sqrt{\nh} \Big) \norm{\thk-\thk'}, \label{th12:8}
\end{align}
\end{spreadlines}
where we add  $\pm \frac{\del \htn' }{\del \htm'} \frac{\del \htm }{\del \htt}$, and use the triangle inequality for (\ref{th12:675}). We use (\ref{eq:hh}) and (\ref{eq:ht}) for (\ref{th12:7}). We use Lemma 1 and $\lambda \in [0,1)$ for (\ref{th12:75}).  We, then, repeat the same manipulations in (\ref{th12:675})-(\ref{th12:75}) to bound the terms with partial derivatives. 

Then, the second term in (\ref{th12:5}) can be bound as:
\begin{spreadlines}{0.8em}
\begin{align}
& 2 \nh \sum_{\tau=1}^t \bnorm{ \phtnt - \phtntp } \\
&\leq 2 \nh  \sum_{\tau=1}^t (t-\tau) \lambda^{t-\tau-1} \Big( \frac{ 2 \lambda^2 \sqrt{\nh}}{1-\lambda}+ 2 \lambda \sqrt{\nh} \Big) \norm{\thk-\thk'} \\
& = \frac{4 \nh \sqrt{\nh}}{1-\lambda} \Big(  \frac{\lambda^2 }{(1-\lambda)^2}  + \frac{\lambda }{1-\lambda} \Big) \norm{\thk-\thk'}. \label{th12:9}
\end{align}
\end{spreadlines}
Here, to get (\ref{th12:9}), we use  the fact that  
$$\sum_{\tau=1}^t (t-\tau) \lambda^{t-\tau-1} \leq \frac{1}{(1-\lambda)^2} \textrm{ for any } t \in \mathbbm{N}.$$

Then, by using (\ref{th12:665}) and (\ref{th12:9}), we can bound (\ref{th12:5}) as
\begin{spreadlines}{0.8em}
\begin{align}
&\bnorm{\frac{\del \loss}{\del \thk}- \frac{\del \losst}{\del \thk}} \\
& \hspace{22mm} \leq  \frac{4 \nh \sqrt{\nh}}{1-\lambda} \Big(  \frac{\lambda^2 }{(1-\lambda)^2} \! + \! \frac{2 \lambda }{1-\lambda}  +  1 \Big) \norm{\thk \! -  \! \thk'} \\
& \hspace{22mm} = \frac{4 \nh \sqrt{\nh}}{(1-\lambda)^3} \norm{\thk-\thk'}. \label{th12:10}
\end{align}
\end{spreadlines}
By realizing that (\ref{th12:10}) holds for an arbitrary $t$, the statement in the theorem can be obtained.

\item This part can be obtained by adapting the steps in the previous proof for $\muk$, and use the Lipschitz conditions in (\ref{eq:hm}), (\ref{eq:m}), (\ref{eq:mm}) accordingly.

\item We use the same notation as in the proof of the 1st statement. Then,
\begin{align}
& \bnorm{\frac{\del \loss}{\del \muk}- \frac{\del \losst}{\del \muk}} \nonumber \\
&  = \!\bnorm{(\dt \! - \! \dht') \C^T \Big(\! \sum_{\tau=1}^t \! \phtntp \phtmup \!\Big)\!- \!(\dt \!-\!\dht) \C^T \Big(\! \sum_{\tau=1}^t \! \phtnt \phtmu \! \Big)  } \label{th15:1} \\
& \leq  2 \sqrt{\nh} \sum_{\tau=1}^t \bnorm{ \phtnt \phtmu- \phtntp \phtmup } \label{th15:2} \\
& \footnotesize{ \leq \! 2 \sqrt{\nh}  \sum_{\tau=1}^t \! \Big( \bnorm{  \phtnt \phtmu \! - \!  \phtnt \phtmup } \! + \! \bnorm{ \phtmup  \phtnt \! - \! \phtmup  \phtntp }  \Big) } \label{th15:25} \\
& \leq \! 2 \sqrt{\nh}  \sum_{\tau=1}^t \! \Big( \bnorm{ \phtnt } \bnorm{ \phtmu \! - \! \phtmup } \! + \! \bnorm{ \phtmup } \bnorm{ \phtnt \! - \! \phtntp }  \Big) \label{th15:3} \\
& \leq 2 \sqrt{\nh} \sum_{\tau=1}^t \lambda^{t-\tau} \bnorm{ \phtmu \! - \! \phtmup } \! + \! 2 \sqrt{\nh \nx} \sum_{\tau=1}^t \! \bnorm{ \phtnt  \! - \! \phtntp }. \label{th15:4}
\end{align} 
Here, to get (\ref{th15:25}), we add $\pm \phtnt \phtmup$ inside the norm in (\ref{th15:2}), and use the triangle inequality. To get (\ref{th15:4}), we use (\ref{eq:h}) and (\ref{eq:m}). 

We bound the terms in (\ref{th15:4}) separately. We begin with the first term. Note that $\htt= \tanh( \W \httm + \U \textbf{x}_{\tau})$, and $\htt'= \tanh( \W' \httm' + \U \textbf{x}_{\tau})$. Then,
\begin{align}
& 2 \sqrt{\nh} \sum_{\tau=1}^t  \lambda^{t-\tau} \bnorm{ \phtmu \!  - \!  \phtmup } \label{th15:455}  \\
& \leq 2 \sqrt{\nh} \sum_{\tau=1}^t  \lambda^{t-\tau} \bnorm{ \phtmu \!  - \! \frac{\del \tanh( \W \httm' + \U \textbf{x}_{\tau-1}) }{\del \thk}} \nonumber \\
& \quad 2 \sqrt{\nh} \sum_{\tau=1}^t  \lambda^{t-\tau} \bnorm{ \frac{\del \tanh( \W \httm' + \U \textbf{x}_{\tau-1}) }{\del \thk} \!  - \!  \phtmup }  \label{th15:45} \\
&\leq  2 \sqrt{\nh} \sum_{\tau=1}^t  \lambda^{t-\tau} \big( 2 \lambda \sqrt{\nx} \norm{\httm \! - \! \httm'} \! + \! 2 \sqrt{\nx \nh} \norm{\thk \! - \! \thk'} \big) \label{th15:5} \\
& \leq 2 \sqrt{\nh}  \Big( \! \sum_{\tau=1}^t \lambda^{t-\tau} \! \Big) \Big( \frac{2 \lambda \sqrt{\nx \nh}}{1-\lambda} + 2 \sqrt{\nx \nh} \! \Big) \norm{\thk \!  - \!  \thk'} \label{th15:6}  \\
&  \leq \frac{4 \nh \sqrt{\nx}}{1-\lambda} \Big( \frac{ \lambda }{1-\lambda} + 1 \Big) \norm{\thk \!  - \! \thk'}, \label{th15:7}
\end{align} 
where to get (\ref{th15:45}), we add $\pm \frac{\del \tanh( \W \httm' + \U \textbf{x}_{\tau-1}) }{\del \thk}$ inside the norm in (\ref{th15:455}), and use the triangle inequality,. We use (\ref{eq:hm}) and (\ref{eq:tm}) for (\ref{th15:5}). We use Lemma 1 and $\lambda \in [0,1)$ for (\ref{th15:6}). 

Now, we bound the second term in (\ref{th15:4}):
\begin{align}
&  2 \sqrt{\nh \nx} \sum_{\tau=1}^t \bnorm{ \phtnt - \phtntp } \\
& \leq 2 \sqrt{\nh \nx}  \sum_{\tau=1}^t (t-\tau) \lambda^{t-\tau-1} \Big( \frac{ 2 \lambda^2 \sqrt{\nh}}{1-\lambda}+ 2 \lambda \sqrt{\nh} \Big) \norm{\thk-\thk'} \\
& = \frac{4 \nh \sqrt{\nx}}{1-\lambda} \Big(  \frac{\lambda^2 }{(1-\lambda)^2}  + \frac{\lambda }{1-\lambda} \Big) \norm{\thk-\thk'}, \label{th15:8}
\end{align}
where we  use (\ref{th12:8}) to bound the terms $\bnorm{ \phtnt - \phtntp }$. 

Then, by using (\ref{th15:7}) and (\ref{th15:8}), we bound (\ref{th15:4}) as follows:
\begin{align}
& \bnorm{\frac{\del \loss}{\del \muk}- \frac{\del \losst}{\del \muk}} \nonumber  \\
& \qquad \qquad \quad     \leq  \frac{4 \nh \sqrt{\nx}}{1-\lambda} \Big(  \frac{\lambda^2 }{(1-\lambda)^2} \! + \! \frac{2 \lambda }{1-\lambda}  +  1 \Big) \norm{\thk \! -  \! \thk'} \nonumber \\
&\qquad \qquad \quad    = \frac{ 4 \nh \sqrt{\nx}}{(1-\lambda)^3} \norm{\thk-\thk'}. \label{th15:9}
\end{align}
By realizing that (\ref{th15:9}) holds for an arbitrary $t$, the statement in the theorem can be obtained.
\end{enumerate}
\end{proof}

\section{Proof of Theorem 2}
\label{sec:th2} 
\begin{proof}[Proof of Theorem 2]
In the following, we use $\anglep{ \cdot, \cdot}$ to denote the inner product. Due to the space constraints, we  omit  the arguments in the partial derivative terms, i.e., 
\begin{alignat*}{2}
&\shortLTht  \coloneqq \partLTht, &&\shortLprojTht \coloneqq \projLTht, \\
&\shortLMt  \coloneqq  \partLMt, \qquad   &&\shortLprojMt  \coloneqq \projLMt.
\end{alignat*}
In the following, we bound the sums $\sum_{t=1}^T \bnorm{\shortLprojTht}^2$ and  $\sum_{t=1}^T \bnorm{\shortLprojMt}^2$ separately. To bound the first sum, we first fix an arbitrary $\muk \in \Km$ and derive an upper-bound for the difference $L_{t,w}( \thk_{t+1}, \muk) - L_{t,w}( \thk_t, \muk)$, i.e.,
\begin{align}
L_{t,w}( \thk_{t+1}, \muk) &- L_{t,w}( \thk_t, \muk) \nonumber  \\
& \leq  \banglep{ \shortLTht, \thk_{t+1}  -  \thk_t }  + \frac{\bTh}{2} \norm{ \thk_{t+1} -  \thk_t}^2 \label{th2:p1}  \\ 
&\leq - \eta \banglep{ \shortLTht, \shortLprojTht } + \frac{\beta \eta^2}{2} \bnorm{\shortLprojTht}^2 \label{th2:p2}   \\
&\leq - \eta \bnorm{\shortLprojTht}^2 + \frac{\beta \eta^2}{2} \bnorm{\shortLprojTht}^2 \label{th2:p3}\\
&=  - \big( \eta -  \frac{\beta \eta^2}{2} \big) \bnorm{\shortLprojTht}^2  \label{th2:p4}
\end{align}
where we use \cite[Lemma 3.4]{Bubeck15} for (\ref{th2:p1}), Theorem 1 for (\ref{th2:p2}), and \cite[Lemma 3.2]{HazanNonConvex} for (\ref{th2:p3}).

Then, we swap the left and right hand sides of (\ref{th2:p4}), add $\pm L_{t+1,w}( \thk_{t+1}, \muk)$ to the right hand side and upper-bound the terms, i.e.,
\begin{align}
\big( \eta -  \frac{\beta \eta^2}{2} \big) \bnorm{\shortLprojTht}^2  &\leq L_{t,w}( \thk_t, \muk) - L_{t+1,w}( \thk_{t+1}, \muk) \nonumber \\
&+ L_{t+1,w}( \thk_{t+1}, \muk) - L_{t,w}( \thk_{t+1}, \muk) \nonumber \\
&\leq L_{t,w}( \thk_t, \muk) - L_{t+1,w}( \thk_{t+1}, \muk) \nonumber \\
&\quad + \frac{4 \sqrt{\nh}}{w}, \label{th2:p5}
\end{align}
where we use $L_{t,w} \in [- \sqrt{\nh}, \sqrt{\nh}]$ for (\ref{th2:p5}).

Next, we sum both sides of (\ref{th2:p5}) over $T$ and get
\begin{align}
\big( \eta -  \frac{\beta \eta^2}{2} \big)  \sum_{t=1}^T \bnorm{\shortLprojTht}^2 &\leq L_{1,w}( \thk_1, \muk) - L_{T+1,w}( \thk_{T+1} ) \nonumber \\
&\quad +  \frac{4 \sqrt{\nh} T}{w}  \\
&\leq 4 \sqrt{\nh} +  \frac{4 \sqrt{\nh} T}{w} . \label{th2:p6}
\end{align}

By realizing that $\big( \eta -  \frac{\beta \eta^2}{2} \big) > \frac{\eta}{2}$ for $0 < \eta \leq 1/\beta$, we simplify (\ref{th2:p6}) as
\begin{equation}
\sum_{t=1}^T \bnorm{\shortLprojTht}^2 \leq \frac{8 \sqrt{\nh}}{\eta} \frac{T}{w} + \frac{8 \sqrt{\nh}}{\eta}. \label{th2:p7}
\end{equation}

We note that we can bound $\sum_{t=1}^T \bnorm{\shortLprojMt}^2$ by fixing $\thk \in \Kt$ and adapting steps between (\ref{th2:p1})-(\ref{th2:p6}) for $\muk$ accordingly. In the end, we get the same bound, i.e.,
\begin{equation}
\sum_{t=1}^T \bnorm{\shortLprojMt}^2 \leq \frac{8 \sqrt{\nh}}{\eta} \frac{T}{w} + \frac{8 \sqrt{\nh}}{\eta}. \label{th2:p8}
\end{equation}

By summing the inequalities in (\ref{th2:p7}) and (\ref{th2:p8}), we get
\begin{align}
R_w(T) \leq \frac{16 \sqrt{\nh}}{\eta} \frac{T}{w} + \frac{16 \sqrt{\nh}}{\eta}. \label{th2:p9}
\end{align}
\end{proof}

\bibliographystyle{IEEEtran}
\balance
\bibliography{my_references}

\end{document}